\documentclass{article}

\usepackage{microtype}
\usepackage{graphicx}
\usepackage{subcaption}
\usepackage{booktabs} 

\usepackage{hyperref}

\usepackage[preprint]{icml2026}

\usepackage{amsmath}
\usepackage{amssymb}
\usepackage{mathtools}
\usepackage{amsthm}

\usepackage[capitalize,noabbrev]{cleveref}

\theoremstyle{plain}
\newtheorem{theorem}{Theorem}[section]

\newtheorem{corollary}[theorem]{Corollary}
\theoremstyle{definition}

\theoremstyle{remark}
\newtheorem{remark}[theorem]{Remark}

\usepackage[textsize=tiny]{todonotes}

\icmltitlerunning{LUCID-SAE: Learning Unified Vision-Language Sparse Codes for Interpretable Concept Discovery}

\begin{document}

\twocolumn[{
  \icmltitle{LUCID-SAE: Learning Unified Vision-Language Sparse Codes for Interpretable Concept Discovery}



  \icmlsetsymbol{equal}{*}

  \begin{icmlauthorlist}
    \icmlauthor{Difei Gu}{rutgers}
    \icmlauthor{Yunhe Gao}{stanford}
    \icmlauthor{Gerasimos Chatzoudis}{rutgers}
    \icmlauthor{Zihan Dong}{rutgers}
    \icmlauthor{Guoning Zhang}{rutgers}
    \icmlauthor{Bangwei Guo}{rutgers}
    \icmlauthor{Yang Zhou}{rutgers}
    \icmlauthor{Mu Zhou}{rutgers}
    \icmlauthor{Dimitris Metaxas}{rutgers}
  \end{icmlauthorlist}

  \icmlaffiliation{rutgers}{Rutgers University}
  \icmlaffiliation{stanford}{Stanford University}

  \icmlcorrespondingauthor{Dimitris Metaxas}{dnm@cs.rutgers.edu}

  \vspace{5mm} 

\begin{center}
    \includegraphics[width=1\textwidth, height=5cm]{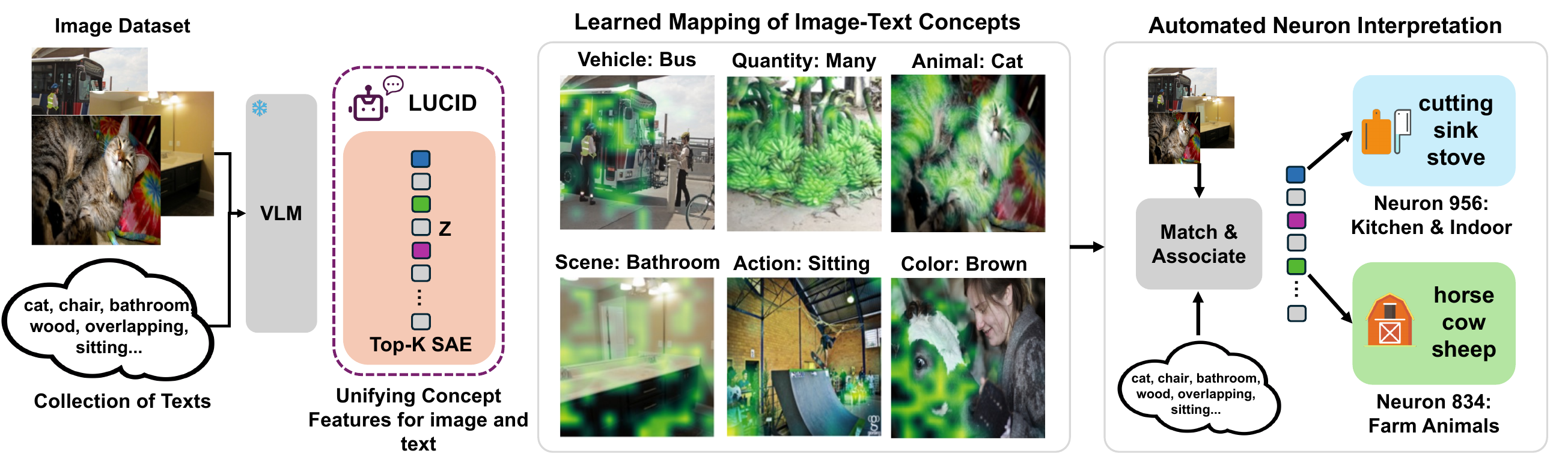}
    \captionof{figure}[width=0.9\textwidth]{\textbf{Overview of LUCID.} LUCID processes image patches and text tokens through a vision-language model (VLM) to learn a uniform sparse dictionary via a Top-K SAE. The unified latent space captures interpretable concept features spanning multiple semantic categories such as objects (\emph{Bus}, \emph{Cat}), scenes (\emph{Bathroom}), actions (\emph{Sitting}), visual attributes (\emph{Brown}), and quantities (\emph{Many}), enabling cross-modal feature alignment and automatic interpretation of image concepts.}
    \label{figure:method_overview}
\end{center}

  \icmlkeywords{Machine Learning, ICML}

  \vskip 0.2in
}]



\printAffiliationsAndNotice{}  

\begin{abstract}
Sparse autoencoders (SAEs) offer a natural path toward comparable explanations across different representation spaces. However, current SAEs are trained per modality, producing dictionaries whose features are not directly understandable and whose explanations do not transfer across domains. In this study, we introduce \textbf{LUCID} (\textbf{L}earning \textbf{U}nified vision-language sparse \textbf{C}odes for \textbf{I}nterpretable concept \textbf{D}iscovery), a unified vision-language sparse autoencoder that learns a shared latent dictionary for image patch and text token representations, while reserving private capacity for modality-specific details. We achieve feature alignment by coupling the shared codes with a learned optimal transport matching objective without the need of labeling. LUCID yields interpretable shared features that support patch-level grounding, establish cross-modal neuron correspondence, and enhance robustness against the concept clustering problem in similarity-based evaluation. Leveraging the alignment properties, we develop an automated dictionary interpretation pipeline based on term clustering without manual observations. Our analysis reveals that LUCID's shared features capture diverse semantic categories beyond objects, including actions, attributes, and abstract concepts, demonstrating a comprehensive approach to interpretable multimodal representations.

\end{abstract}


\section{Introduction}

Modern representation learning models encode rich semantic structure in high-dimensional activations~\cite{zhai2023sigmoid, brown2020language, bengio2012representation, vaswani2017attention}. Yet turning these activations into explanations that are stable, comparable, and transferable remains challenging. Growing efforts~\cite{bricken2023monosemanticity, cunningham2023sparse} have explored disentangled feature codes that decompose dense activations into separable concepts. Among these, sparse autoencoders (SAEs)~\cite{lim2024sparse, stevens2025sparse} represents as a key paradigm by mapping activations to sparse codes over learned dictionary elements. These yield interpretable units that can be inspected, visualized, and leveraged for feature-level analyses. This feature-centric perspective has established SAEs as increasingly valuable tools for interpretation, evaluation, and safety-oriented understanding of learned representations~\cite{cunningham2023sparse,bricken2023monosemanticity, pach2025sparse, ferrando2024know, yinconstrain}.

A key limitation of SAEs is that they are always trained independently for each modality~\cite{fry2024towards, thasarathan2025universal, templeton2024scaling}. Consequently, concepts learned in one modality lack the guaranteed correspondence to analogous concepts in the another. This absence of cross-space comparability impedes the construction of consistent explanations across diverse inputs and domains. It also fundamentally restricts downstream applications requiring shared semantics, such as grounding language and vision evidence within a unified set of interpretable concepts. Without an explicit alignment, independently trained SAEs tend to learn idiosyncratic, model-specific bases. In this study, we ask a fundamental question: \textbf{Can we learn a single set of sparse codes whose units are directly comparable across vision and language spaces?}

Achieving this objective presents significant challenges because representation spaces differ substantially in their distributional properties, dimensionality, and tokenization granularity. While training separate SAEs preserves reconstruction fidelity, it sacrifices the alignment necessary for cross-model interpretation. At the heart of this challenge, the core trade-off centers on: (i) a shared set of semantic units that enables comparability and transfer, and (ii) sufficient flexibility to capture modality-specific factors without constraining them to the common representation.

We introduce LUCID as a unified sparse autoencoder that learns a shared latent code used to encode VLM~\cite{zhai2023sigmoid, radford2021learning}. Specifically, LUCID disentangles image patch and text token activations and produces sparse codes within a common latent space, enabling direct concept-level correspondence across modalities. Additionally, LUCID reserves private capacity for each space, allowing key representation of space-specific information that need not be shared. Training combines reconstruction objectives for each space with an alignment signal that encourages paired inputs (e.g., image-caption pairs) to activate consistent shared concepts. As a result, this yields interpretable features that are simultaneously sparse and cross-space comparable without a need of labeling.

Our major contributions are:
\begin{itemize}
    \item \textbf{Unified multimodal sparse coding.} We introduce LUCID, a sparse autoencoder framework that learns a shared dictionary across vision and language through shared-private decomposition and optimal transport-based alignment.
    
    \item \textbf{Patch-level grounding via guided optimal transport.} We develop GCMT (Guided Cross-Modal Transport), which combines entropy-regularized transport with structural and contextual guidance to establish the fine-grained alignment between image patches and text tokens. 
    
    \item \textbf{Automated interpretation and empirical analysis.} We demonstrate an automatic neuron interpretation pipeline through cross-modal concept clustering, revealing coherent semantic specializations without manual annotation. 

\end{itemize}

\section{Related Work}

\textbf{Model Interpretation with Concepts.} Concept Bottleneck Models (CBMs)~\cite{koh2020concept,rao2024discover,gao2024aligning} represent a seminal approach in this direction. They introduce an interpretable intermediate layer where predictions are explicitly mediated through predefined concepts. In the original CBM framework, models are trained to first predict a set of human-specified concepts (e.g., ``has wings," ``is blue") before making final task predictions. This creates a transparent decision pathway. The architecture enables practitioners to inspect which concepts contribute to each prediction and even intervene on concept activations to correct model behavior. However, CBMs face significant limitations. They require extensive concept annotations during training. The predefined concept set requires heavy manual design and may not align with the features the model naturally learns~\cite{havasi2022addressing}. As a result, this bottleneck constraint can compromise overall task performance. 

\textbf{Mechanisic Interpretability for Sparse Autoencoders.} Recent advances in mechanistic interpretability~\cite{elhage2021mathematical,bricken2023monosemanticity, cunningham2023sparse} have introduced sparse autoencoders (SAEs) as a powerful tool for decomposing neural network representations into interpretable components. SAEs address a critical challenge of polysemantic nature of neurons, where individual neurons often respond to multiple unrelated features due to superposition. The goal of SAEs is to learn an overcomplete dictionary~\cite{sharkey2022taking} of monosemantic features that linearly reconstruct model activations under sparsity constraints. This enables SAEs to effectively disentangle superposed representations, recovering interpretable feature directions from polysemantic neurons~\cite{elhage2022toy}. Beyond basic feature extraction, recent work has trained SAEs at various network depths~\cite{rajamanoharan2024improving} and leveraged SAE features for model steering~\cite{cho-hockenmaier-2025-toward}. However, current SAE faces several key limitations. First, the methodology has been predominantly applied to language models with limited exploration in vision domains~\cite{fry2024towards,thasarathan2025universal} and vision-language cross-modal concept alignment. Second, the features discovered by SAEs are highly fine-grained and task-specific, making them difficult to interpret through manual inspection alone~\cite{paulo2024automatically}. In particular, current efforts lack mechanisms for automatic vision concept interpretation. This dependency on manual inspection fundamentally limits the practical utility of SAEs as an interpretability tool in a multimodal setting.

\section{Method}
\textbf{Problem Formulation.} We study the problem of learning interpretable, sparse feature codes that are comparable across multiple representation spaces. Let \(\mathcal{M}\) denote a set of activation spaces from the VLM. For each \(m \in \mathcal{M}\), we assume access to a representation function \(f_{m}\) that maps an input \(x^{(m)}\) to a set of feature vectors \(H^{(m)}\):
\begin{equation*}
H^{(m)} = f_m(x^{(m)}) \in \mathbb{R}^{d_m},
f_m : \mathbb{R}^{T_m}\rightarrow \mathbb{R}^{d_m},
\end{equation*}

where \(T_m\) is the length of the token sequence of image patches or text tokens, and \(d_m\) is the feature dimension in space \(m\). In the case of Vision Language Models (VLMs), we provide dataset \(\mathcal{D}\) that contains paired images and texts samples across spaces:
\begin{equation*}
\mathcal{D} = {(x_n^{(\text{img})},x_n^\text{(txt)})}^N_{n=1},
\end{equation*}
where \(\text{img},\text{txt} \in \mathcal{M}\). Our goal is to find a training paradigm that learn a unified sparse autoencoder that produces shared sparse dictionary \(z\) whose coordinates have the same semantic meaning across spaces.



\begin{figure}[t]
\centering
\label{fig:all22}
\includegraphics[width=1\linewidth]{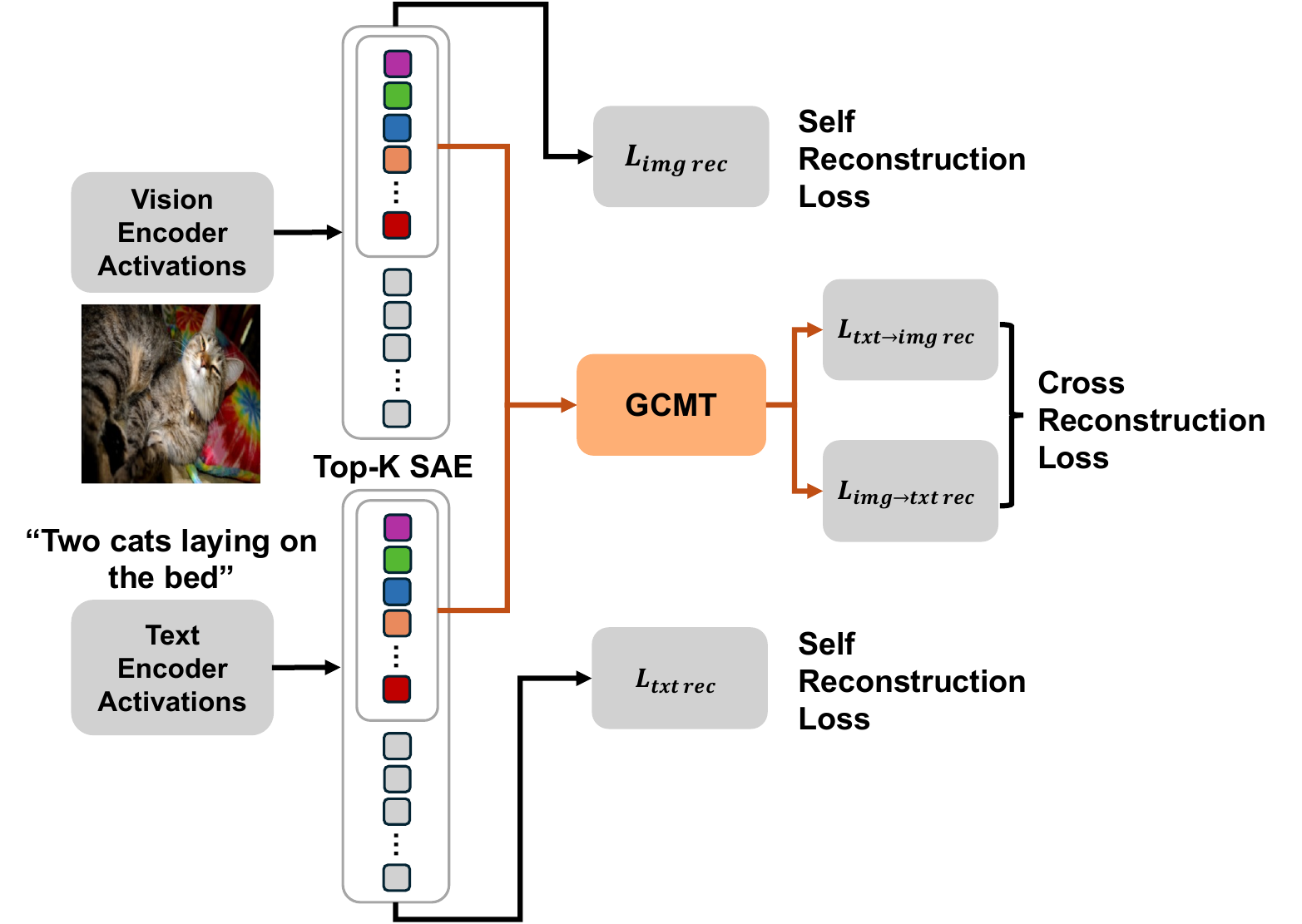}\\
\caption{Model structure of LUCID showing the high-level loss components. The model is optimized at two levels: self-reconstruction is done on the shared + private code space, while cross-reconstruction is done on the shared code space.}
\end{figure}

\begin{figure*}[t]
\centering

\makebox[\textwidth][c]{%
  \includegraphics[width=0.49\textwidth]{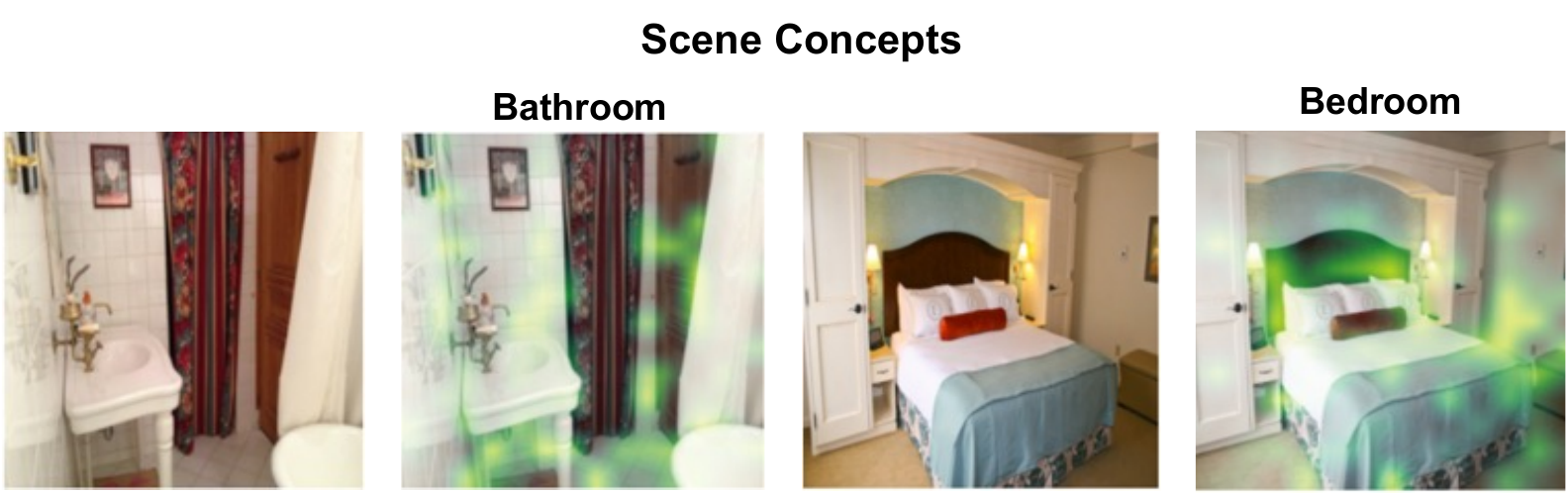}\hfill
  \includegraphics[width=0.49\textwidth]{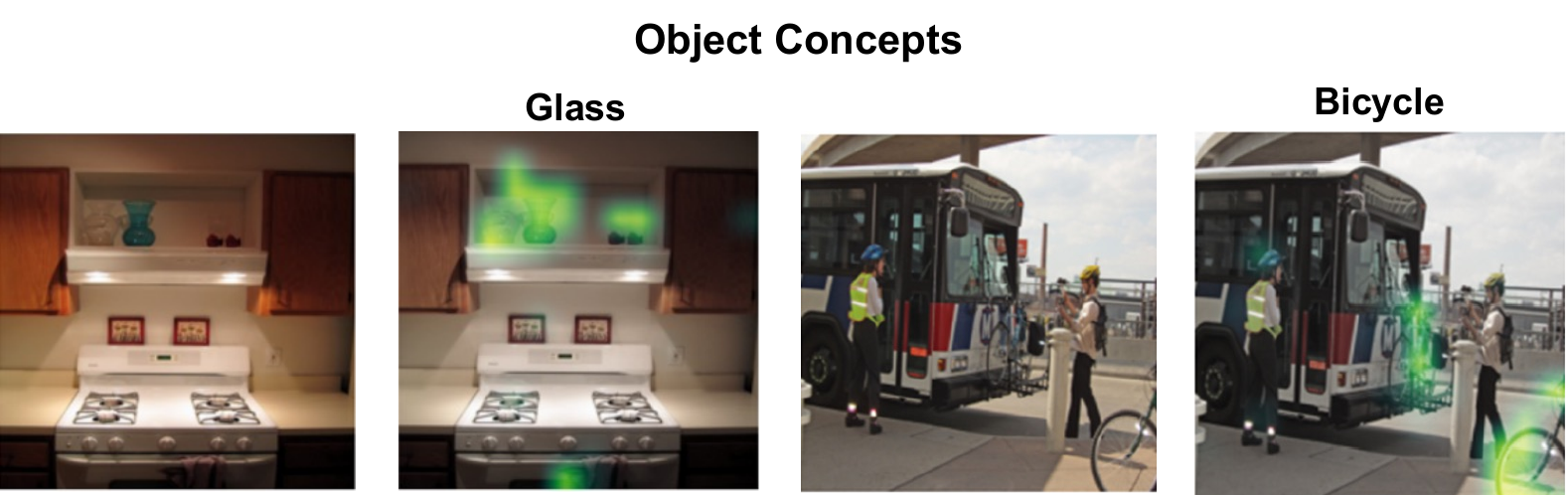}%
}\\[0.6em]
\makebox[\textwidth][c]{%
  \includegraphics[width=0.49\textwidth]{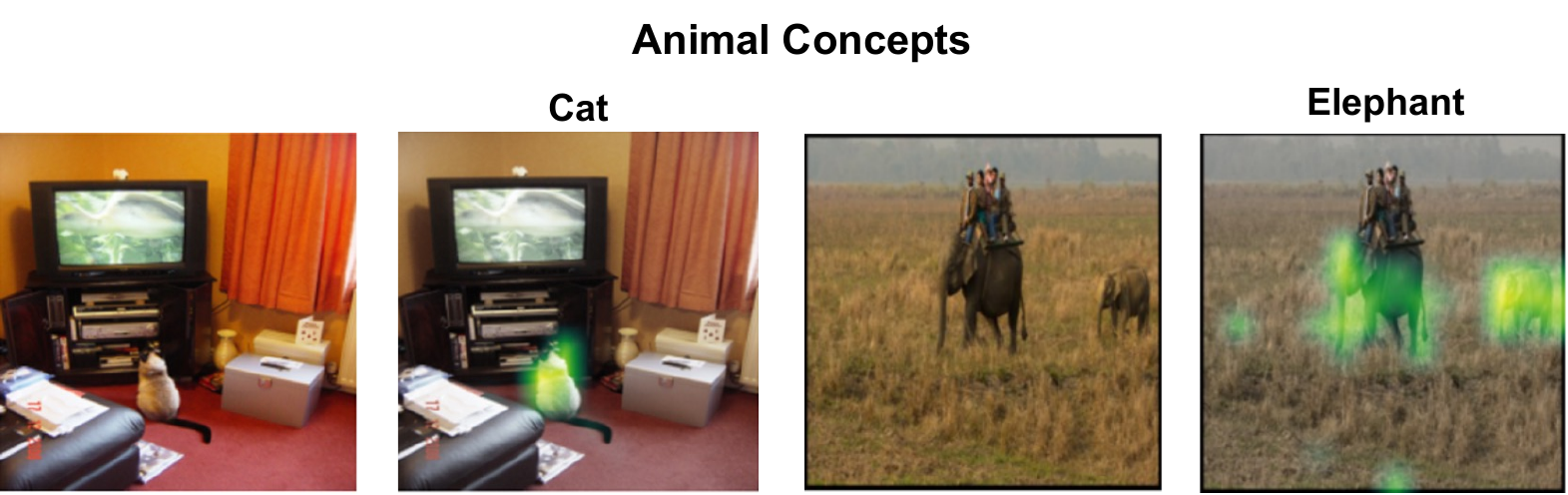}\hfill
  \includegraphics[width=0.49\textwidth]{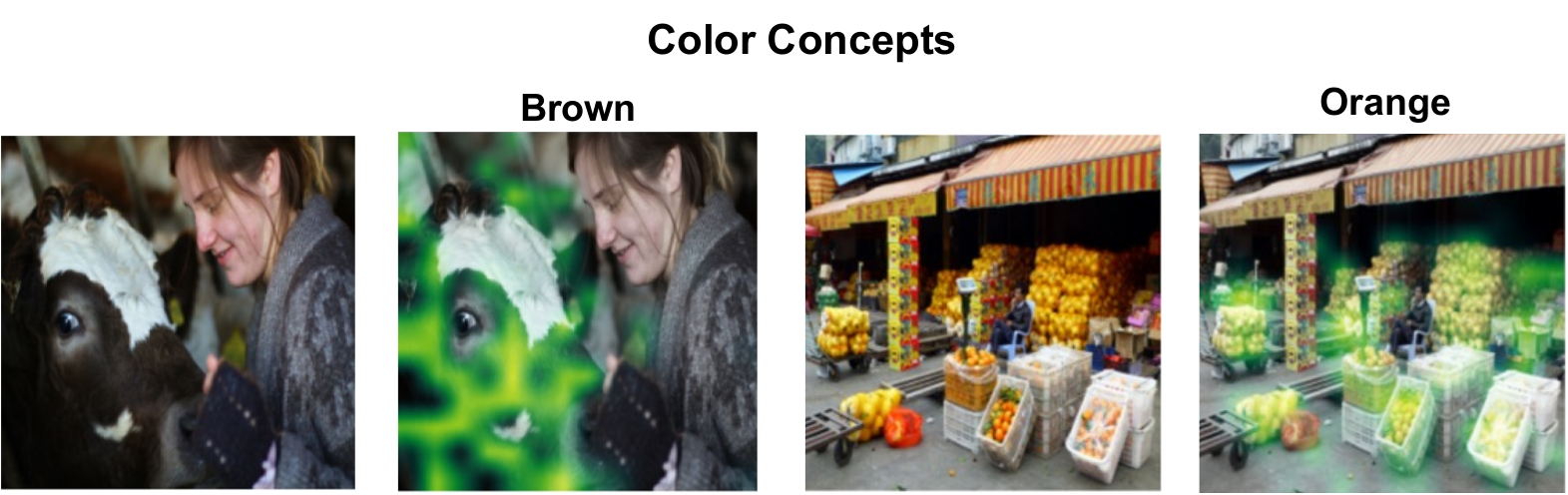}%
}\\[0.6em]
\makebox[\textwidth][c]{%
  \includegraphics[width=0.49\textwidth]{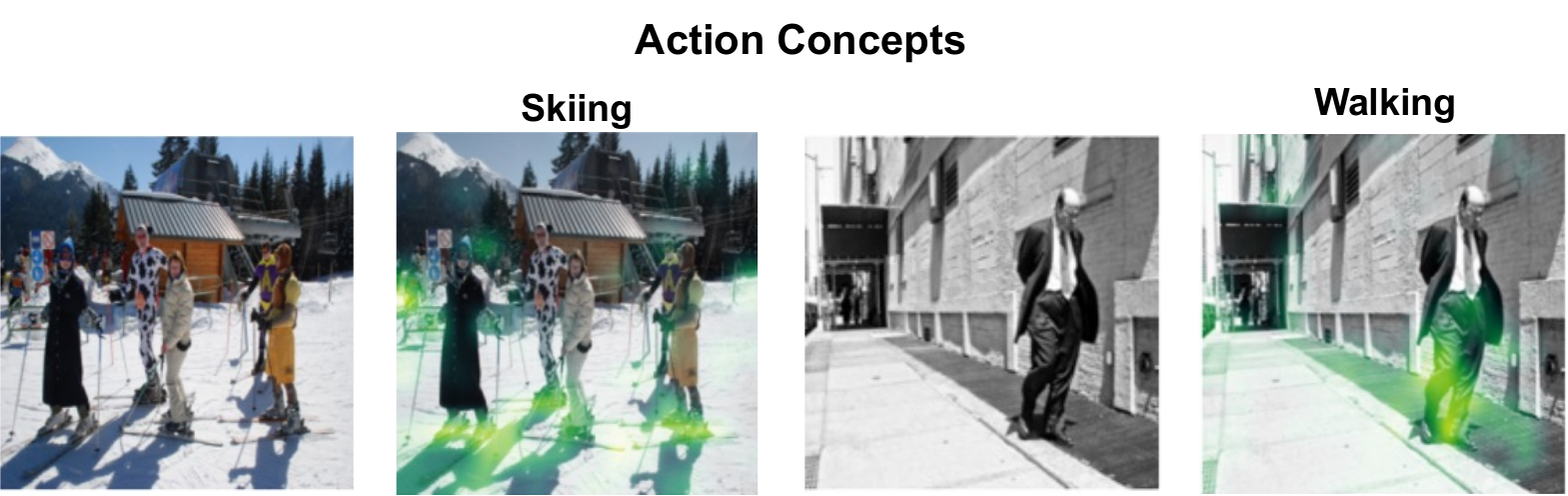}\hfill
  \includegraphics[width=0.49\textwidth]{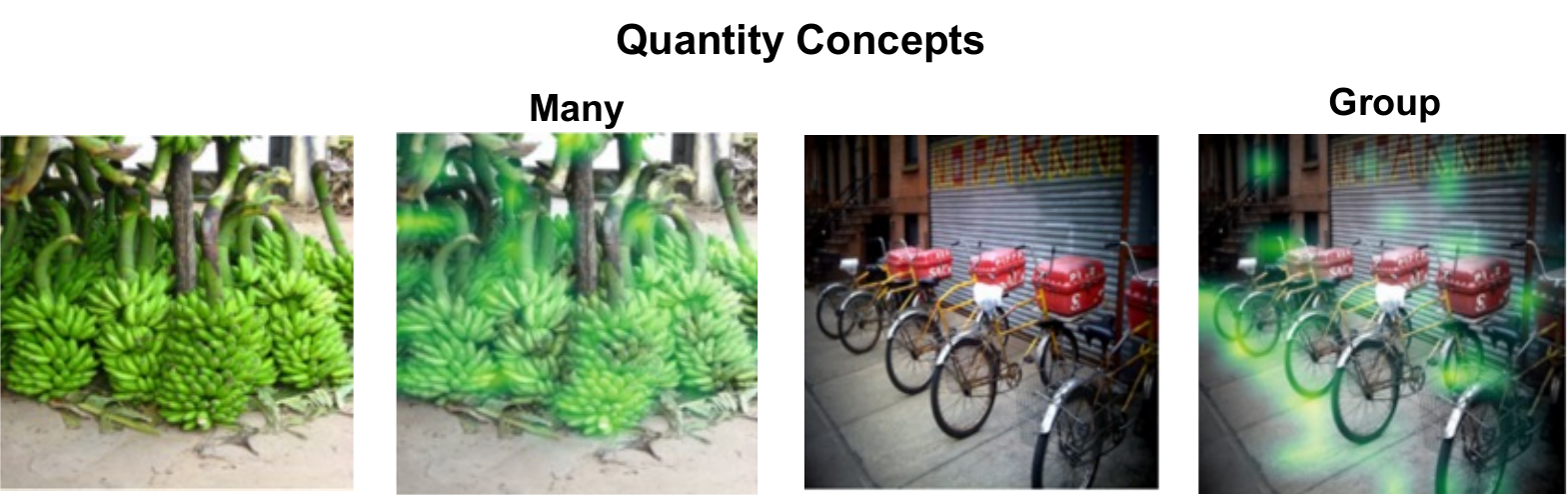}%
}
\caption{\textbf{Concept discovery across semantic categories.} Examples of diverse concepts discovered by LUCID on COCO images. For each concept, we show two images with activation heatmaps (green overlay) demonstrating spatial localization. LUCID captures concepts spanning scenes (\emph{Bathroom}, \emph{Bedroom}), objects (\emph{Glass}, \emph{Bicycle}), animals (\emph{Cat}, \emph{Elephant}), colors (\emph{Brown}, \emph{Orange}), actions (\emph{Skiing}, \emph{Walking}), and quantities (\emph{Many}, \emph{Group}), with activations precisely localizing to semantically relevant regions.}
\label{fig:coco_concepts}

\end{figure*}

\subsection{LUCID: Unified Sparse Coding with Shared and Private Capacity}

We aim to learn sparse codes that are both interpretable through sparsity and comparable across representation spaces. LUCID achieves this by decomposing the SAE dictionary space into a shared sparse code and a private sparse code. The shared code captures cross-space semantic concepts, while the private code accommodates modality-specific features.

\subsubsection{Architecture}

For each space $m \in \{\text{img}, \text{txt}\}$ and each element $h_n^{(m)} \in \mathbb{R}^{d_m}$, LUCID produces:
\begin{equation*}
(z_{s,n}^{(m)}, z_{p,n}^{(m)}) = E_m(h_n^{(m)}), \quad z_{s,n}^{(m)} \in \mathbb{R}^{K_s}, \quad z_{p,n}^{(m)} \in \mathbb{R}^{K_p},
\end{equation*}
where \(s\) and \(p\) means shared and private. \(E_m\) is a sparse encoder, and the output subject to Top-$k$ sparsity constraints~\cite{gao2024scaling} $\|z_{s,n}^{(m)}\|_0 \leq k_s$ and $\|z_{p,n}^{(m)}\|_0 \leq k_p$. Each space has a corresponding decoder $D_m$:
\begin{equation*}
\hat{h}_n^{(m)} = D_m(z_{s,n}^{(m)}, z_{p,n}^{(m)}).
\end{equation*}

We adopt a linear parameterization for the decoder:
\begin{equation*}
\hat{h}_t^{(m)} = W_s^{(m)} z_{s,n}^{(m)} + W_p^{(m)} z_{p,n}^{(m)} + b^{(m)},
\end{equation*}
where $W_s^{(m)} \in \mathbb{R}^{d_m \times K_s}$, and $W_p^{(m)} \in \mathbb{R}^{d_m \times K_p}$ captures shared and private structure specific to that space.

This design balances two competing objectives. Fully separate SAEs yield unaligned dictionaries where units are not comparable across spaces. Conversely, fully shared dictionaries can over-constrain reconstruction when spaces contain non-overlapping factors. The shared and private decomposition enables cross-space comparability while maintaining space-specific reconstruction fidelity.

\subsection{Cross-Space Alignment via Optimal Transport}

Because the number of elements $T_a$ and $T_b$ may differ (e.g., image patches versus text tokens), we employ Optimal Transport (OT) to define a principled many-to-many alignment between elements across paired samples $(H^{(a)}, H^{(b)})$.

\subsubsection{Transport Plan Formulation}

For each pair $n$, we define a cost matrix $C_n \in \mathbb{R}^{T_a \times T_b}$ whose entry $C_{n,ij}$ measures the mismatch between element $i$ in space $a$ and element $j$ in space $b$. We consider measuring distance in shared-code space with cosine similarities:
\begin{equation*}
C_{n,ij} = 1 - \cos(z_{s,n,i}^{(a)}, z_{s,n,j}^{(b)}).
\end{equation*}

Given marginal distributions $r_n \in \Delta^{T_a}$ and $c_n \in \Delta^{T_b}$ (strictly positive probability vectors), we compute an entropically regularized OT plan:
\begin{equation} \label{eq:ot}
\begin{aligned}
\Pi_n^\star = &\operatorname*{arg\,min}_{\Pi \in \mathbb{R}_+^{T_a \times T_b}} \langle C_n, \Pi \rangle - \varepsilon H(\Pi) \\
&\text{s.t.} \quad \Pi \mathbf{1} = r_n, \; \Pi^\top \mathbf{1} = c_n,
\end{aligned}
\end{equation}
where $H(\Pi) = -\sum_{ij} \Pi_{ij} \log \Pi_{ij}$ is the entropy and $\varepsilon > 0$ controls the softness of the assignment.

The OT framework offers several advantages over fixed correspondence assumptions. It naturally handles unequal set sizes, allows soft many-to-many correspondences, and provides a differentiable alignment signal. This flexibility is particularly well-suited for patch-token granularity differences, avoiding the restrictive assumption of fixed one-to-one pairings. Critically, the transport plan $\Pi_n^\star$ achieves alignment by minimizing the total cost $\langle C_n, \Pi_n^\star \rangle$, which directly encourages semantically corresponding elements across modalities to have similar representations in the shared-code space. The entropy regularization $-\varepsilon H(\Pi)$ prevents degenerate matchings and enables smooth, probabilistic correspondences that are robust to minor variations and redundancies in each modality. By incorporating the OT distance as a training loss, we provide a geometric regularization that pulls cross-modal representations closer together, thereby learning a shared-code space where alignment is structurally enforced rather than assumed. The following theorem establishes the closed-form solution to~\eqref{eq:ot} and utilizing sinkhorn algorithm~\cite{cuturi2013sinkhorn} shows linear time convergence for the problem, proof shown in~\ref{cor:sinkhorn}.

\begin{theorem}[Closed-Form Solution for Entropy-Regularized OT]
\label{thm:sinkhorn}
The unique optimal solution to~\eqref{eq:ot} admits the factorization:
\begin{equation*}
\Pi_n^\star = \mathrm{diag}(u) \, K \, \mathrm{diag}(v),
\end{equation*}
where $K_{ij} = \exp(-C_{n,ij}/\varepsilon)$ is the Gibbs kernel, and $u \in \mathbb{R}_{++}^{T_a}$, $v \in \mathbb{R}_{++}^{T_b}$ are scaling vectors uniquely determined (up to a multiplicative constant) by the marginal constraints.
\end{theorem}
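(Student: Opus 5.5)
The plan is the standard Lagrangian-duality argument for entropic OT, organized as existence and uniqueness, then first-order conditions, then uniqueness of the scalings.

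\emph{Existence and uniqueness.} The feasible set $U(r_n,c_n)=\{\Pi\in\mathbb{R}_+^{T_a\times T_b}:\Pi\mathbf{1}=r_n,\ \Pi^\top\mathbf{1}=c_n\}$ is nonempty, since it contains $r_nc_n^\top$. It is also a compact convex polytope, because every entry is bounded by $1$. The objective $\Pi\mapsto\langle C_n,\Pi\rangle+\varepsilon\sum_{ij}\Pi_{ij}\log\Pi_{ij}$ is continuous on this set, with the convention $0\log 0=0$. It is strictly convex, since $x\log x$ is strictly convex and $\varepsilon>0$. Hence a minimizer exists and is unique.

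\emph{Strict positivity of the minimizer.} The minimizer has all entries strictly positive. Suppose some $\Pi^\star_{ij}=0$ and consider $\Pi_t=(1-t)\Pi^\star+t\,r_nc_n^\top$. The latter matrix is feasible and strictly positive because $r_n,c_n$ are. The derivative of the objective at $t=0^+$ contains a term $\varepsilon\,t\log t$ contribution whose derivative tends to $-\infty$, while all other terms have bounded derivative. So moving slightly along this path strictly decreases the objective, a contradiction. This is the one genuinely delicate step, and I would handle it exactly by this directional-derivative argument. Once positivity holds, the nonnegativity constraints are inactive and the problem is locally a smooth equality-constrained convex program.

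\emph{First-order conditions.} Introduce multipliers $f\in\mathbb{R}^{T_a}$ and $g\in\mathbb{R}^{T_b}$ and form the Lagrangian
\[
L(\Pi,f,g)=\langle C_n,\Pi\rangle+\varepsilon\sum_{ij}\Pi_{ij}(\log\Pi_{ij}-1)-\langle f,\Pi\mathbf{1}-r_n\rangle-\langle g,\Pi^\top\mathbf{1}-c_n\rangle.
\]
Using $\log\Pi_{ij}-1$ instead of $\log\Pi_{ij}$ only shifts the objective by the constant $\varepsilon$. Since the constraints are affine and a strictly positive feasible point exists, Slater's condition holds and KKT conditions are necessary. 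Stationarity gives $C_{n,ij}+\varepsilon\log\Pi^\star_{ij}-f_i-g_j=0$, that is,
\[
\Pi^\star_{ij}=e^{f_i/\varepsilon}\,e^{-C_{n,ij}/\varepsilon}\,e^{g_j/\varepsilon}.
\]
Setting $u_i=e^{f_i/\varepsilon}>0$ and $v_j=e^{g_j/\varepsilon}>0$ yields $\Pi^\star=\mathrm{diag}(u)K\,\mathrm{diag}(v)$. Conversely, by convexity, any matrix of this form that satisfies the marginal constraints is a KKT point and therefore the minimizer.

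\emph{Uniqueness of the scalings up to a constant.} Suppose $\mathrm{diag}(u)K\mathrm{diag}(v)=\mathrm{diag}(u')K\mathrm{diag}(v')$. Because $K_{ij}>0$ for all entries, we get $u_iv_j=u'_iv'_j$ for all $i,j$. Then $u_i/u'_i=v'_j/v_j$ must equal a single constant $\lambda>0$ independent of $i$ and $j$, so $(u',v')=(u/\lambda,\lambda v)$.

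Existence of such $u,v$ already follows from the KKT step. Alternatively, it can be obtained from convergence of the Sinkhorn iterations $u\leftarrow r_n/(Kv)$, $v\leftarrow c_n/(K^\top u)$, which I would defer to the later corollary on Sinkhorn convergence (\ref{cor:sinkhorn}).
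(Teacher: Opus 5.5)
Your proposal is correct and follows the same route as the paper's proof. Both introduce a Lagrangian with multipliers for the two marginal constraints, use strict positivity of the minimizer to deactivate the nonnegativity constraints, and exponentiate the stationarity condition to get $\Pi^\star=\mathrm{diag}(u)K\,\mathrm{diag}(v)$. Your version is also more complete than the paper's, which asserts positivity from the singularity of $\log$ at $0$ and cites Sinkhorn's theorem for uniqueness of the scalings: you prove existence by compactness, justify positivity with an explicit directional-derivative argument along the segment toward $r_nc_n^\top$, and get uniqueness of $(u,v)$ up to scaling from $u_iv_j=u'_iv'_j$ together with your converse KKT step.
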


\subsubsection{Guided Cross-Modal Transport (GCMT)}
While the standard OT formulation in~\eqref{eq:ot} provides flexible alignment, purely similarity-driven transport may align spurious or overly generic elements when pairings are weak or ambiguous. We extend our method to Guided Cross-Modal Transport (GCMT), which incorporates lightweight structural guidance into the OT formulation to bias transport toward semantically plausible correspondences. We denote the resulting guided transport plan as $\Pi_n^{\text{GCMT}}$.

\paragraph{Masked Transport (Structural Guidance).}
Let $M_n \in \{0,1\}^{T_a \times T_b}$ be a binary (or soft) mask indicating admissible alignments, such as phrase-to-region compatibility, noun-to-object priors, or geometric constraints. We incorporate $M_n$ by modifying the cost matrix:
\begin{equation*}
\tilde{C}_n = C_n + \beta(1 - M_n),
\end{equation*}
where large $\beta$ discourages forbidden pairs. The OT plan is then computed using $\tilde{C}_n$ in place of $C_n$ in~\eqref{eq:ot}.

\paragraph{Mass Reweighting (Grounding Guidance).}
GCMT biases transport mass by adjusting OT marginals based on global context compatibility. We compute a global visual context vector $g^{(a)} = \frac{1}{T_a} \sum_{i=1}^{T_a} z_{s,n,i}^{(a)}$ and measure each text token's alignment via $w_{n,j}^{(b)} = \max(0, \text{cos}(g^{(a)}, z_{s,n,j}^{(b)}))$. Text tokens inconsistent with the visual scene receive lower weights. The marginals are set as:
\begin{equation*}
r_n = \frac{1}{T_a} \mathbf{1}_{T_a}, \quad c_n = \frac{w_n^{(b)}}{\mathbf{1}^\top w_n^{(b)}}.
\end{equation*}
This compatibility can also be incorporated into the cost matrix: $\tilde{C}_n = C_n + \lambda_{\text{global}} \cdot (1 - w_{n}^{(b)})^\top \mathbf{1}_{T_a}$, where $\lambda_{\text{global}}$ controls the modulation strength.

\subsubsection{Alignment Loss}
\label{sec:alignment}

We encourage aligned elements to activate consistent shared codes. While the soft OT plan provides smooth gradient supervision, it can yield diffuse correspondences. In our implementation, we encourage crisper \emph{semantic} agreement by imposing a \emph{global semantic lock}. Concretely, for each sample $n$, we form pooled shared codes
$\bar z^{(a)}_{s,n}=\frac{1}{T_a}\sum_i z^{(a)}_{s,n,i}$ and
$\bar z^{(b)}_{s,n}=\frac{1}{|M_n|}\sum_{j\in M_n} z^{(b)}_{s,n,j}$, and pooled activations
$\bar h^{(a)}_{n}=\frac{1}{T_a}\sum_i h^{(a)}_{n,i}$ (and analogously for $b$). We then decode the pooled code from one modality through the other modality's decoder and regress to the pooled target:

\begin{equation*}
\mathcal{L}_{\text{align}} = \sum_{n}\bigl\|D^{(a)}(\bar z^{(b)}_{s,n})-\bar h^{(a)}_{n}\bigr\|_2^2,
\end{equation*}

with a symmetric formulation for $a \leftrightarrow b$. We treat the pooled targets $\bar h^{(\cdot)}$ as stop-gradient to maintain training stability while enforcing that shared dimensions carry consistent global semantics across the two modalities.

\subsection{Cross-Modal Reconstruction}
Beyond self-reconstruction, LUCID leverages the transport plan to perform cross-modal reconstruction, which validates that shared codes capture meaningful cross-modal semantics and provides supervision to strengthen concept alignment.

\subsubsection{Cross-Reconstruction via Barycentric Mapping}
Given the transport plan $\Pi_n^{\text{GCMT}}$, we map representations across modalities via barycentric projections. For vision-to-text and text-to-vision reconstruction:
\begin{equation*}
\begin{aligned}
\hat{h}_{n,i}^{(a \to b)} &=
\sum_j \frac{\Pi_{n,ij}}{\sum_{j'} \Pi_{n,ij'}} \cdot D_b(z_{s,n,j}^{(b)}), \\
\hat{h}_{n,j}^{(b \to a)} &=
\sum_i \frac{\Pi_{n,ij}}{\sum_{i'} \Pi_{n,i'j}} \cdot D_a(z_{s,n,i}^{(a)}).
\end{aligned}
\end{equation*}

where $D_a, D_b$ are the decoders. These reconstructions use only shared codes $z_s$, ensuring the transport plan operates on genuinely shared semantic content while modality-specific information remains in private codes $z_p$.

\subsubsection{Cross-Reconstruction Loss}
\label{sec:cross}
The cross-reconstruction loss measures reconstruction quality using transported shared codes:
\begin{equation*}
\mathcal{L}_{\text{cross}} = \sum_{n,i} \|h_{n,i}^{(a)} - \hat{h}_{n,i}^{(b \to a)}\|_2^2 + \sum_{n,j} \|h_{n,j}^{(b)} - \hat{h}_{n,j}^{(a \to b)}\|_2^2.
\end{equation*}
Combined with $\mathcal{L}_{\text{align}}$, this creates a bidirectional consistency constraint: shared codes must be both geometrically close in latent space and decodable to semantically equivalent representations in both modalities.

\subsection{Training Objective}
LUCID is trained end-to-end to reconstruct activations in each space while aligning shared codes across spaces. The full training objective is:
\begin{equation*}
\begin{aligned}
\mathcal{L} = \alpha \sum_n \sum_{m \in \{a,b\}} \sum_{t=1}^{T_m} \underbrace{\|h_{n,t}^{(m)} - \hat{h}_{n,t}^{(m)}\|_2^2}_{\mathcal{L}_{\text{self}}} \\ 
+ \beta \underbrace{\mathcal{L}_{\text{align}}}_{\text{OT alignment}} + \gamma \underbrace{\mathcal{L}_{\text{cross}}}_{\text{cross-recon}},
\end{aligned}
\end{equation*}

The self-reconstruction term $\mathcal{L}_{\text{self}}$ ensures faithful representation of the original activations within each modality. The alignment term $\mathcal{L}_{\text{align}}$ (from Section~\ref{sec:alignment}) enforces geometric proximity of corresponding shared codes using the OT-derived correspondences. The cross-reconstruction term $\mathcal{L}_{\text{cross}}$ (from Section~\ref{sec:cross}) enforces that shared codes are functionally equivalent across spaces when decoded back to their original representation spaces. The sparsity terms enforce interpretability through sparse activation patterns, with separate regularization strengths for shared ($\lambda_s$) and private ($\lambda_p$) codes. The hyperparameters $\beta$ and $\gamma$ control the relative importance of these objectives. Detailed parameter settings are provided in appendix~\ref{app:implementation}.

\begin{table*}[t]
\centering
\small
\setlength{\tabcolsep}{6pt}
\renewcommand{\arraystretch}{1.15}
\begin{tabular}{p{2.2cm}p{3.2cm}p{10.2cm}}
\hline
\textbf{Neuron ID} & \textbf{Semantic} & \textbf{Top terms (dominant semantic cluster)} \\
\hline
178 & Urban Street Scene & traffic light, traffic cone, pavement, street sign, street vendor, street, windy, driver, sidewalk, stop sign, sandwich, pedestrian, pedestrian signal, safety vest, drawer, wheel, magazine, scooter \\
54 & Aquatic Activities & water reflection, water surface, water, lake, apple, swimming, wings, river, surfboard, boat, boat wake, cyclist, ocean, wave, knife, harness, dog, dog walker \\
\hline
\end{tabular}
\caption{Semantic interpretations of SAE neurons obtained via clustering their top-activating text concepts. We show two example neurons: Neuron 178 captures urban street features, while Neuron 54 captures aquatic/water-related features. An extended table with additional neurons is provided in Appendix~\ref{app:additional_viz}.}
\label{tab:neuron_terms_short}
\end{table*}

\begin{figure*}
\centering

\includegraphics[width=0.49\linewidth]{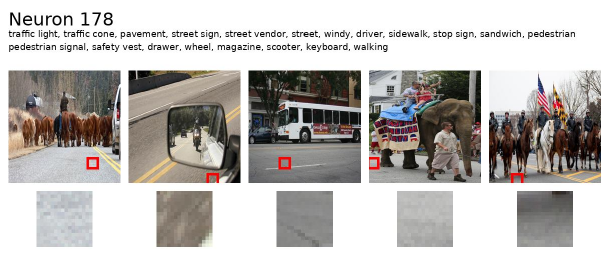}\hfill
\includegraphics[width=0.49\linewidth]{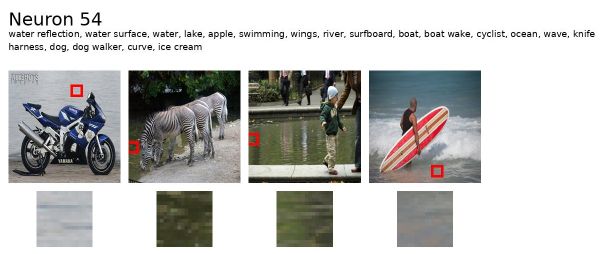}\\[0.6em]
\caption{Visualization of two randomly selected neurons. For each neuron, we show the highest-scoring retrieved images and highlight the most activated patch (red box), along with the corresponding patch crops. Additional neuron visualizations are provided in Appendixin Appendix~\ref{app:additional_viz}.}
\label{fig:all22_short}
\vspace{-1em}
\end{figure*}

\subsection{Automated Neuron Interpretation via Concept Clustering}
\label{sec:neuron_interp}

A key advantage of aligned multimodal dictionaries is the ability to automatically interpret shared neurons by leveraging cross-modal correspondences. We employ a pipeline that clusters text concepts associated with a neuron’s activations, revealing semantic themes without manual annotation.

\subsubsection{Interpretation Pipeline}
The interpretation process is summarized into three phases (see appendix for algorithm~\ref{alg:neuron_interp_simple}):

\begin{itemize}
    \item \textbf{Stage 1: Concept Encoding \& Weighting.} We encode a concept library $\mathcal{C}$ into shared-code representations $z_c$. To down-weight generic features, we apply an inverse document frequency (IDF) weight, $\text{idf}_m = \log(N / \text{df}_m)$, based on neuron activation frequency across the library.
    \item \textbf{Stage 2: Spatial Concept Matching.} We compute spatial activation maps $\mathcal{A}_i(p) = \min(z_{p}^{(a)}, z_{c_i}^{(b)}) \odot \text{idf}$ to identify the intersection between image patches and concepts. Concepts are associated with the dominant neuron $m^*$ at their peak activation location.
    \item \textbf{Stage 3: Semantic Clustering.} Associated concepts are clustered using sentence-level embeddings via greedy cosine similarity (threshold $\sigma \approx 0.78$). The largest cluster defines the neuron's primary semantic specialization.
\end{itemize}

\subsubsection{Qualitative Results}
We filter for interpretability by requiring a minimum number of concept hits and a dominant cluster purity $\geq 0.55$. Table~\ref{tab:neuron_terms_short} and Figure~\ref{fig:all22_short} illustrate this mapping for two representative neurons. \textbf{Neuron 178} specializes in urban navigation, grouping terms like \textit{traffic light}, \textit{sidewalk}, and \textit{stop sign}. In contrast, \textbf{Neuron 54} captures aquatic themes, clustering \textit{lake}, \textit{river}, and \textit{surfboard}. 

The high cluster purity validates that shared neurons capture consistent, cross-modal concepts. This automated pipeline scales to thousands of neurons, providing a systematic lens into the model's learned representations.


\begin{figure*}[t]
\centering
\makebox[\textwidth][c]{%
  \includegraphics[width=0.49\textwidth]{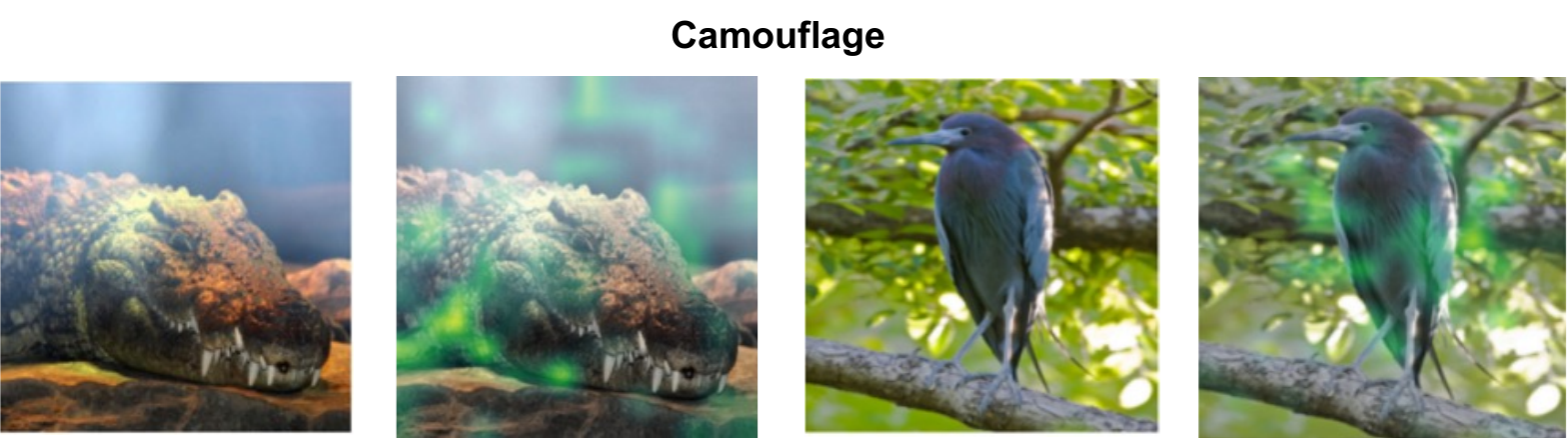}\hfill
  \includegraphics[width=0.49\textwidth]{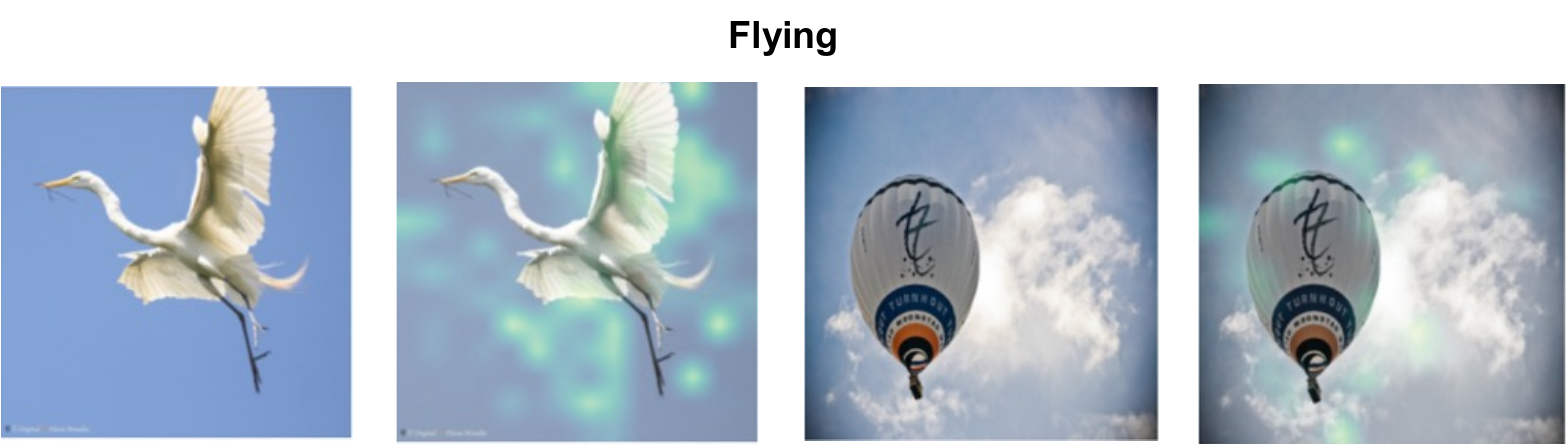}%
}\\[0.6em]
\makebox[\textwidth][c]{%
  \includegraphics[width=0.49\textwidth]{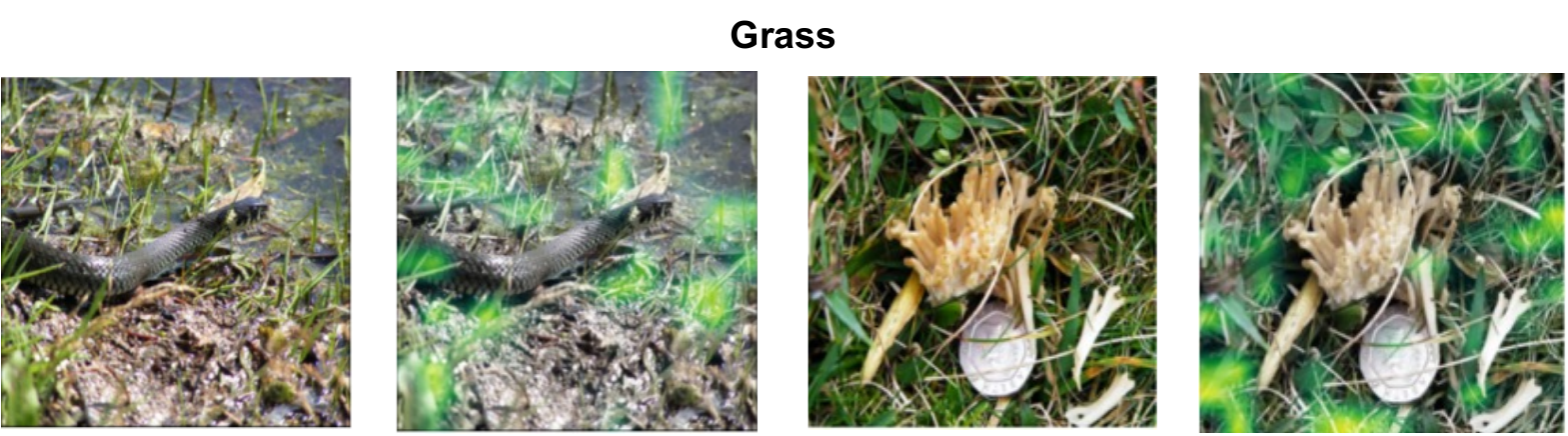}\hfill
  \includegraphics[width=0.49\textwidth]{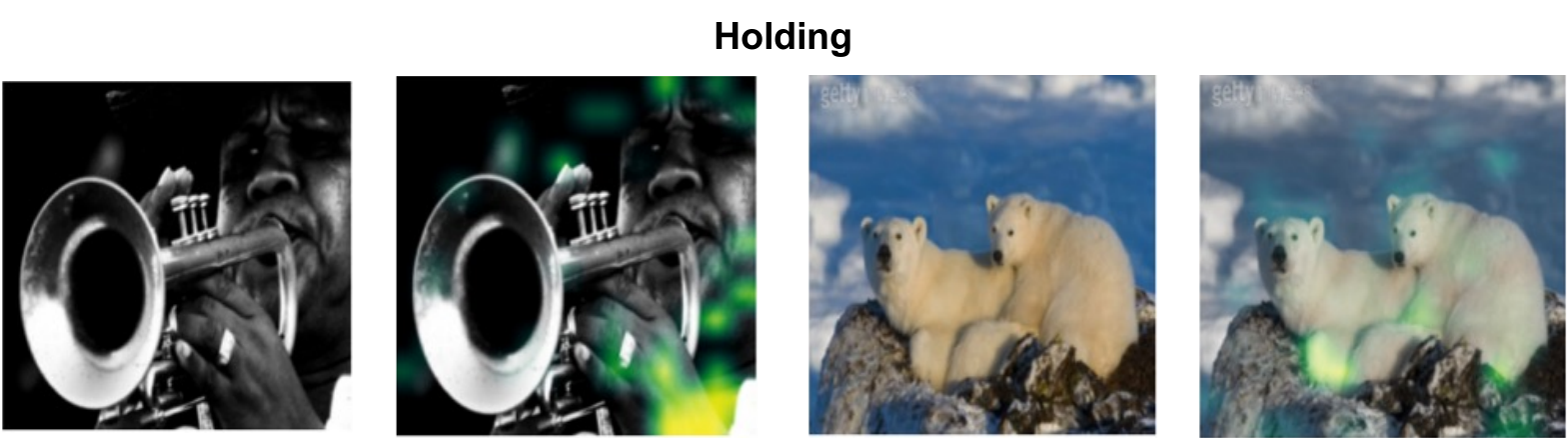}%
}\\[0.6em]
\makebox[\textwidth][c]{%
  \includegraphics[width=0.49\textwidth]{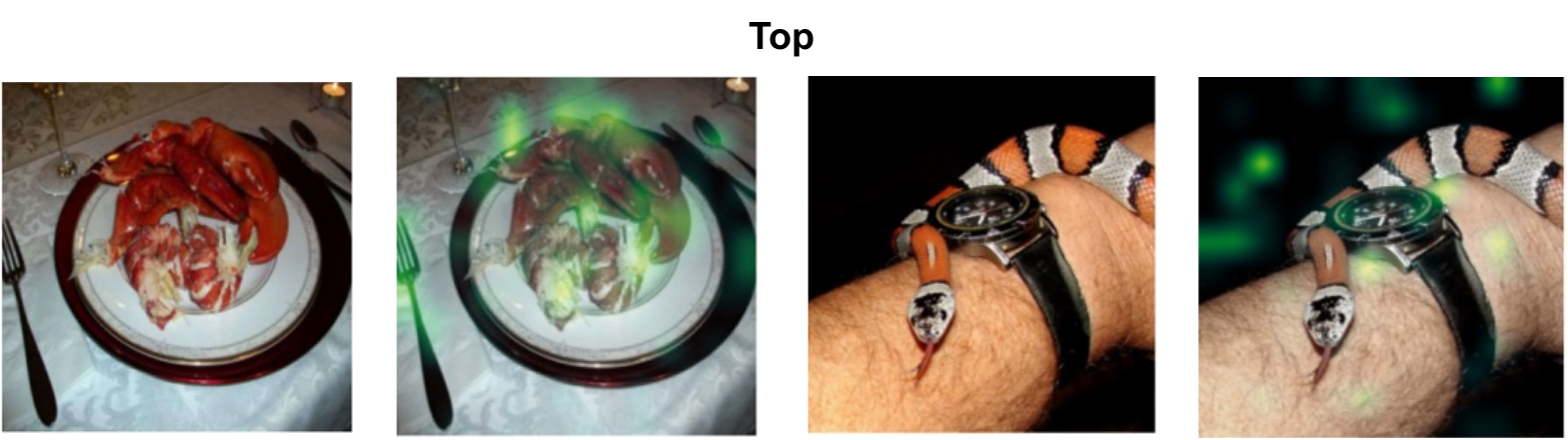}\hfill
  \includegraphics[width=0.49\textwidth]{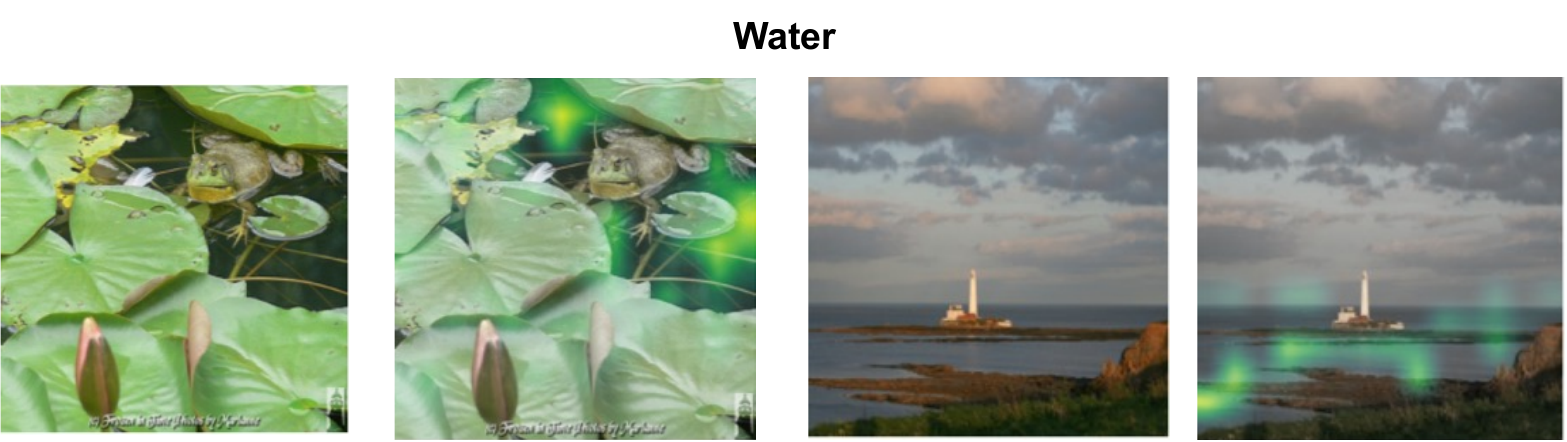}%
}
\caption{Out-of-distribution data (ImageNet) used to validate concept discovery with LUCID. For each discovered concept (e.g., \emph{camouflage}, \emph{flying}, \emph{grass}, \emph{holding}, \emph{top}, \emph{water}), we show two representative images along with activation overlays highlighting the spatial evidence supporting the concept.}
\label{fig:imagenet_concepts}
\end{figure*}

\section{Experimental Results}
This section validates three core contributions of our method: (1) the shared-private capacity decomposition improves both reconstruction fidelity and cross-modal predictability of the shared subspace, (2) optimal transport coupling enables precise object-level grounding measured through local alignment metrics, and (3) structured shared-code regularization mitigates clustering degeneracy while improving grounding.
\subsection{Experimental setup}
We train our approach on combined MS COCO 2017~\cite{lin2014microsoft} and CC3M~\cite{sharma2018conceptual} with their corresponding image-text caption pair. No ground truths are used during training. When evaluating, we use the COCO instance annotations to obtain ground-truth object regions (bounding boxes; masks are a drop-in replacement when available). Our backbone is SigLIP ViT-B/16~\cite{zhai2023sigmoid,dosovitskiy2020image}, and our model learns sparse concept codes over image patch tokens and text tokens.

\begin{table}[t]
\centering
\scriptsize
\setlength{\tabcolsep}{2pt}
\renewcommand{\arraystretch}{0.95}
\resizebox{\columnwidth}{!}{%
\begin{tabular}{lcc}
\toprule
\textbf{Metric} & \textbf{Shared Only} ($r{=}1.0$) & \textbf{Shared+Priv} ($r{=}0.25$) \\
\midrule
$R^2_{\mathrm{V,self}}$& 0.4045 & \textbf{0.4606} \\
$R^2_{\mathrm{V,self}}\scriptstyle(\mathrm{shared})$       & - & 0.2705 \\
$R^2_{\mathrm{V,self}}\scriptstyle(\mathrm{private})$      & - & 0.2445 \\
\addlinespace[2pt]
$R^2_{\mathrm{T,self}}$& 0.5737 & \textbf{0.6684} \\
$R^2_{\mathrm{T,self}}\scriptstyle(\mathrm{shared})$       & - & 0.5088 \\
$R^2_{\mathrm{T,self}}\scriptstyle(\mathrm{private})$      & - & 0.2550 \\
\midrule
$R^2_{\mathrm{T\rightarrow V,cross}}\scriptstyle(\mathrm{shared})$ & 0.0552 & \textbf{0.1171} \\
$R^2_{\mathrm{V\rightarrow T,cross}}\scriptstyle(\mathrm{shared})$ & 0.3220 & \textbf{0.4055} \\
\bottomrule
\end{tabular}}
\caption{\textbf{Shared–private decomposition improves reconstruction.}
Adding private capacity ($r{=}0.25$) increases self $R^2$ while improving shared cross $R^2$ ($T{\to}V$, $V{\to}T$). For $r{=}0.25$ we also report shared vs.\ private contributions to self $R^2$.}
\label{tab:r2_main_condensed_direct}
\vspace{-4em}
\end{table}

\subsection{Concept Discovery and Interpretability}
\label{sec:concept_discovery}

A key advantage of our learned aligned sparse dictionaries is the ability to discover and visualize semantically meaningful concepts directly from the learned representations. Figure~\ref{fig:coco_concepts} showcases diverse concept types discovered by LUCID on COCO across randomly selected semantic categories, ie. scene concepts (bathroom, bedroom) and action concepts (skiing, walking). The heat map images reveal precise spatial localization, ie.``glass" activates on drinking glasses, ``bicycle" on bike frames. Scene concepts like ``bedroom" activate on characteristic furniture arrangements, action concepts like ``skiing" respond to human poses and contextual cues, and quantity concepts like ``many" capture compositional properties. These results demonstrate that the shared space captures both low-level perceptual attributes and high-level semantic categories. Since these activation maps measure intersection between vision patch codes and text concept codes in the shared space, the spatial precision validates that our transport-based alignment successfully enforces semantic correspondence at the neuron level. Figure~\ref{fig:imagenet_concepts} validates that discovered concepts generalize beyond the training distribution by evaluating the same learned dictionary on ImageNet~\cite{russakovsky2014imagenet}. We select random category concepts: camouflage, flying, grass, holding, top, and water. The activation patterns remain semantically meaningful and spatially precise. ``Camouflage" activates on natural patterns, ``flying" responds to birds with sky context, ``grass" localizes to ground vegetation, and ``holding" identifies object-hand interactions. This generalization demonstrates that LUCID learns transferable semantic concepts rather than dataset-specific correlations, supporting the use of learned representations for downstream zero-shot tasks. Additional concept discovery visualizations are provided in Appendix~\ref{app:additional_viz}.

\subsection{Shared-Private Capacity Decomposition}

Table~\ref{tab:r2_main_condensed_direct} validates that private capacity improves both reconstruction fidelity and cross-modal predictability. Introducing private capacity ($r{=}0.25$) improves overall self-reconstruction ($R^2_{\text{V,self}}$: $0.40 \rightarrow 0.46$; $R^2_{\text{T,self}}$: $0.57 \rightarrow 0.67$) compared to shared-only ($r{=}1.0$). The decomposition reveals that in vision, private-only contribution ($0.2445$) nearly equals the shared-only path ($0.2705$), confirming that substantial visual variance is modality-specific. In text, the shared-only component remains dominant ($0.5088$), while private units capture residual information.

Crucially, private capacity \emph{strengthens} the shared subspace as a cross-modal interface. Shared-only cross-predictability improves significantly: $R^2_{\mathrm{T\rightarrow V,cross}}$ increases from $0.055$ to $0.117$, and $R^2_{\mathrm{V\rightarrow T,cross}}$ from $0.322$ to $0.406$. This confirms that allocating private capacity to absorb modality-specific details allows the shared dictionary to specialize in genuinely comparable features, shielding it from modality-dependent noise. We note a persistent asymmetry where text$\rightarrow$vision performance is lower; this is expected, as semantically compressed captions cannot fully recover dense visual details like background and layout.


\begin{table}[t]
\centering
\scriptsize
\setlength{\tabcolsep}{10pt}
\renewcommand{\arraystretch}{1.05}
\begin{tabular}{lccc}
\toprule
\textbf{Configuration} & \textbf{mass@obj} & \textbf{point@1} & \textbf{IoU@10} \\
\midrule
OT only           & 0.4020 & 0.4730 & 0.2746 \\
Share k           & 0.4039 & 0.5501 & 0.2800 \\
Share k GSL      & 0.4097 & 0.5578 & 0.2913 \\
Share k GSL GCMT & \textbf{0.4106} & \textbf{0.6002} & \textbf{0.2984} \\
\bottomrule
\end{tabular}
\caption{\textbf{Object-level grounding via BBOX metrics (higher is better).}
We form a shared-code spatial heatmap and compare it to ground-truth (GT) boxes. Metric definition in appendix~\ref{app:implementation}. Results compare OT-only and shared Top-$k$ variants, with the best score in each column in bold.}
\label{tab:claimb_bbox_grounding}
\vspace{-2em}
\end{table}

\subsection{Object-Level Grounding via Optimal Transport}

Unlike global instance discrimination, our approach optimizes \emph{token-to-patch} alignment using Optimal Transport (OT). Because global retrieval metrics can be obscured by artifacts like concept clustering, we evaluate local grounding directly using OT-based heatmaps. By aggregating transport mass $P_{ij}$ over tokens for each patch $j$, we generate a spatial grid compared against ground-truth (GT) bounding boxes via three metrics: \textbf{mass@obj} (mass fraction within GT), \textbf{point@1} (peak saliency accuracy), and \textbf{IoU@10} (overlap of the top 10\% salient patches).

Table~\ref{tab:claimb_bbox_grounding} demonstrates that enforcing sparse, structured shared codes significantly sharpens grounding. Compared to the OT baseline, the Share k variant improves point@1 from $0.473$ to $0.550$, indicating that sparsification helps the model localize salient features. Adding a \emph{Global Semantic Lock (GSL)} further improves all metrics (mass@obj: $0.410$, point@1: $0.558$, IoU@10: $0.291$), consistent with better cross-modal shared-code semantics. Incorporating GCMT on top of GSL yields the best overall performance (mass@obj: $0.411$, point@1: $0.600$, IoU@10: $0.298$). These gains confirm that Top-$k$ selection reduces diffuse activations while GCMT encourages context-aware, object-focused couplings. Together, shared-code sparsity and GSL/GCMT sharpen transport concentration on semantically relevant object regions, providing a more faithful measure of local alignment than global retrieval alone.


\begin{table}[t]
\centering
\scriptsize
\setlength{\tabcolsep}{3pt}
\renewcommand{\arraystretch}{1.0}
\begin{tabular}{lccc}
\toprule
\textbf{Configuration} & \textbf{Clust t2i maxfreq $\downarrow$} & \textbf{Clust i2t maxfreq $\downarrow$} & \textbf{BBOX point@1 $\uparrow$} \\
\midrule
OT only            & 0.7958 & 0.3810 & 0.4730 \\
Share k            & 0.6516 & 0.2055 & 0.5501 \\
Share k GSL       & 0.4590 & 0.1240 & 0.5578 \\
Share k GSL GCMT  & \textbf{0.3538} & \textbf{0.0740} & \textbf{0.6002} \\
\bottomrule
\end{tabular}
\caption{\textbf{Degeneracy vs. grounding.}
Share-$k$ + GSL/GCMT reduce clustering maxfreq (t2i/i2t) while improving BBOX point@1.}

\label{tab:claimC_clustering_tradeoff}
\vspace{-2em}
\end{table}

\subsection{Sparsity and the Selectivity--Degeneracy Trade-off}
Our third contribution shows that structured shared codes provide a practical knob for balancing \emph{selectivity} against \emph{degeneracy} in cross-modal concept representations. Table~\ref{tab:claimC_clustering_tradeoff} characterizes this interaction using two complementary signals: a concept-clustering diagnostic (\textbf{maxfreq}) that captures dimension overuse, and an object-level localization metric (\textbf{BBOX point@1}) that evaluates spatial grounding accuracy. We compute maxfreq by measuring, for each shared dimension $m$, its activation frequency over the evaluation set $\mathcal{D}$ and taking the maximum:
\begin{equation}
\mathrm{maxfreq} \;=\; \max_{m}\;\frac{1}{|\mathcal{D}|}\sum_{x\in\mathcal{D}} \mathbb{1}\!\left[z_m(x)>0\right],
\end{equation}
where $z(x)\in\mathbb{R}^{d_{\mathrm{sh}}}_{\ge 0}$ is the shared sparse code. The BBOX point@1 metric tests whether the peak of the shared-code heatmap falls inside a ground-truth bounding box, providing a direct measure of whether the learned correspondences localize salient object regions.

The table indicates that degeneracy is strongly affected by how shared codes are \emph{regularized and coupled} across modalities. Relative to OT only, Share $k$ reduces maxfreq (t2i: $0.7958\!\rightarrow\!0.6516$, i2t: $0.3810\!\rightarrow\!0.2055$), suggesting fewer dimensions dominate across clustered concept queries. Adding \emph{Global Semantic Lock (GSL)} further suppresses concentration (t2i: $0.4590$, i2t: $0.1240$) while slightly improving grounding (point@1: $0.5578$), consistent with better semantic consistency of the shared basis. Finally, incorporating GCMT yields the lowest maxfreq (t2i: $0.3538$, i2t: $0.0740$) and the best point@1 ($0.6002$), indicating that context-aware transport reduces spurious token matches that otherwise promote repeated cluster assignments.

\section{Conclusion}

We introduced LUCID, a unified sparse autoencoder framework that learns interpretable, cross-modal concept dictionaries by decomposing the latent space into shared and private components. This design addresses a fundamental limitation of existing sparse coding approaches: the inability to align concepts across modalities while preserving reconstruction fidelity. Our empirical validation demonstrates three key contributions: (1) shared-private decomposition improves both reconstruction quality and cross-modal predictability by shielding the shared space from modality-specific noise, (2) optimal transport-based alignment enables precise patch-level grounding as measured by local spatial metrics, and (3) structured shared-code objectives mitigate clustering degeneracy while sharpening object-level grounding.

Leveraging these alignment properties, we developed an automated neuron interpretation pipeline that discovers coherent semantic clusters without manual annotation. LUCID captures diverse concept categories spanning objects, scenes, actions, attributes, and spatial relationships, with learned representations generalizing to out-of-distribution data. This establishes aligned sparse dictionaries as a foundation for automatic interpretability at scale, providing a principled approach toward interpretable multimodal representations that support systematic cross-modal analysis.





\nocite{langley00}

\bibliography{example_paper}
\bibliographystyle{icml2026}

\newpage
\appendix
\onecolumn
\section{Appendix}



\subsection{Additional Implementation Details}
\label{app:implementation}

\paragraph{Model Architecture.}
We use SigLIP ViT-B/16 as our vision-language backbone, extracting patch-level features from the vision encoder and token-level features from the text encoder. For vision, we use the penultimate layer activations (before the final projection), yielding $196$ patch tokens of dimension $768$. For text, we extract token embeddings after the transformer layers but before pooling, with dimension $768$. Both SAEs map these $768$-dimensional activations to a shared dictionary of size $M_{\text{shared}} = 1536$ (25\% of total capacity) and private dictionaries of size $M_{\text{private}} = 4608$ (75\% of capacity). We use Top-$k$ sparsity with $k_{\text{shared}} = 16$ and $k_{\text{private}} = 32$.

\paragraph{Training Details.}
We train on MS COCO 2017 train split (118k image-caption pairs) for 200 epochs with batch size 256. The optimizer is AdamW with learning rate $3 \times 10^{-4}$, weight decay $0.01$, and cosine annealing schedule. Loss weights are set to $\alpha = 1.0$ (self-reconstruction), $\beta = 0.5$ (alignment), and $\gamma = 0.3$ (cross-reconstruction). For optimal transport, we use entropy regularization $\varepsilon = 0.1$ and run 10 Sinkhorn iterations. The GCMT guidance uses $\lambda_{\text{global}} = 0.5$ for global context modulation. Training takes approximately 24 hours on 8 NVIDIA RTX 8000 GPUs.

\paragraph{Evaluation Metrics.}
\textbf{Self-reconstruction $R^2$} measures variance explained by the SAE reconstruction: $R^2 = 1 - \frac{\sum_i \|h_i - \hat{h}_i\|^2}{\sum_i \|h_i - \bar{h}\|^2}$, where $\bar{h}$ is the mean activation.
\textbf{Cross-reconstruction $R^2$} measures how well codes from one modality predict the other under the learned transport plan.
\textbf{BBOX metrics} evaluate spatial grounding: mass@obj is the fraction of heatmap mass inside ground-truth boxes, point@1 is pointing-game accuracy (whether peak activation lands in a box), and IoU@10 measures overlap between boxes and the top-10\% salient region.
\textbf{Clustering maxfreq} measures degeneracy by finding the maximum activation frequency of any shared dimension across the evaluation set.

\paragraph{Extended Reconstruction Analysis.}
Table~\ref{tab:r2_appendix_extensive} provides a comprehensive breakdown of reconstruction quality under different coding strategies. The key insight is that the shared+private decomposition improves both self-reconstruction and cross-predictability compared to forced sharing. For forced sharing ($r=1.0$), all capacity is shared, so full global = full joint = shared. With shared+private ($r=0.25$), we observe: (1) Full joint reconstruction (shared + private) outperforms shared-only, demonstrating that private capacity captures additional variance. (2) Shared-only cross-reconstruction improves with private capacity, indicating that private codes absorb modality-specific noise that would otherwise contaminate the shared space.

\subsection{Concept Interpretation Pipeline Details}
\label{app:interp_details}

Our automated interpretation pipeline (Algorithm~\ref{alg:neuron_interp_simple}) operates in four stages. We provide additional implementation details for each stage.

\begin{algorithm}[t]
\caption{Automated Neuron Interpretation Pipeline}
\label{alg:neuron_interp_simple}
\begin{algorithmic}[1]
\REQUIRE Concept list $\mathcal{C}$, images $\mathcal{I}$, encoders and SAEs, thresholds
\ENSURE Semantic clusters for each shared neuron

\STATE \textbf{Stage 1: Encode Concepts}
\FOR{each concept $c$ in $\mathcal{C}$}
  \STATE Encode $c$ through text pathway to get shared code $z_c$
\ENDFOR

\STATE \textbf{Stage 2: Compute IDF Weights (optional)}
\FOR{each neuron $m$}
  \STATE Count how many concepts activate neuron $m$
  \STATE Compute inverse document frequency: $\text{idf}_m = \log(\text{\# concepts} / \text{\# activating } m)$
\ENDFOR

\STATE \textbf{Stage 3: Match Images to Concepts}
\STATE Initialize empty concept lists for each neuron
\FOR{each image $I$ in $\mathcal{I}$}
  \STATE Extract patch-level shared codes from vision pathway
  \FOR{each concept $c$}
    \STATE Compute activation map: intersection of image patches with concept code
    \STATE Find peak activation location and dominant neuron at that location
    \IF{peak confidence $>$ threshold}
      \STATE Associate concept $c$ with the dominant neuron
    \ENDIF
  \ENDFOR
\ENDFOR

\STATE \textbf{Stage 4: Cluster Concepts per Neuron}
\FOR{each neuron with associated concepts}
  \STATE Encode all associated concepts using VLM text encoder (for clustering)
  \STATE Group concepts by semantic similarity (greedy cosine clustering)
  \STATE Rank clusters by size
\ENDFOR

Clustered concepts for each neuron
\end{algorithmic}
\end{algorithm}

\begin{table*}[t]
\centering
\small
\setlength{\tabcolsep}{6pt}
\renewcommand{\arraystretch}{1.15}
\begin{tabular}{p{2.2cm}p{3.2cm}p{10.2cm}}
\hline
\textbf{Neuron ID} & \textbf{Semantic} & \textbf{Top terms (dominant semantic cluster)} \\
\hline
956 & Kitchen \& Indoor & cutting, cutting board, kitchen, kitchen sink, cabinet, countertop, stove, wallet, cookie, stone, living room, shoe, shelf, recliner, oven, knife, door, attached to \\
178 & Urban Street Scene & traffic light, traffic cone, pavement, street sign, street vendor, street, windy, driver, sidewalk, stop sign, sandwich, pedestrian, pedestrian signal, safety vest, drawer, wheel, magazine, scooter \\
1293 & Spatial Relationships & in hand, in mouth, in basket, in sink, in front of, mirrored, on table, on floor, on wall, on shelf, on head, on plate, throwing, horns, surfboard, flying, foot, skirt \\
1292 & Office / Desk Scene & camera, writing, paper, laptop, tape measure, next to, bag, office chair, desk, keyboard, notebook, box, newspaper, book, cup, foot, camouflage, backlit \\
1485 & Overcast Coastal / Outdoor Scene & overcast, over, smoothie, microwave, tail, taillight, depth layers, shallow depth, beach shore, beach, horizon, oval, cloudy, market day, market, beer, bleachers, single \\
1403 & Pasture / Roadside Outdoors & grass field, grass, ground, drill, grazing, road barrier, road, hand, bush, green, pantry, sheep, right, metallic, leaves, oval, black, broccoli \\
1515 & Dining / Bar Setting & spoon, wine, wine glass, pants, barstool, juice, basket storage, basket, tea, kettle, plate, menu, soda, pie, cocktail, cake, wedding, dining table \\
221 & Rail / Transit Corridor & waterfall, rail track, tunnel, platform, can, center, cooking, distant, palm tree, train, train window, station interior, rubber, bicycle, paintbrush, running, speaker, motion blur \\
1100 & Human Poses / Activities & eating, holding, sitting, twilight, tie, life jacket, sleeping, rectangle, sofa, seesaw, camouflage, leaves, lying, mouth, remote, red, wooden, cup \\
834 & Farm Animals & horse, spots, behind, cow, harness, faucet, hair, mouth, cactus, drinking, sheep, mouse, brown, crowd, motion blur, speaker, cutting, cutting board \\
1217 & Beach/Ocean + Crowd & large, medium, van, wave, ocean, palm tree, ice cream, horizon, picnic, concert, passenger, station interior, cupcake, line, tablet, star, ticket \\
54 & Aquatic Activities & water reflection, water surface, water, lake, apple, swimming, wings, river, surfboard, boat, boat wake, cyclist, ocean, wave, knife, harness, dog, dog walker \\
\hline
\end{tabular}
\caption{Semantic interpretations of SAE neurons obtained via clustering their top-activating text concepts (Extended version).}
\label{tab:neuron_terms}
\end{table*}

\begin{figure}[p]
\centering
\label{fig:all22}

\includegraphics[width=0.49\linewidth]{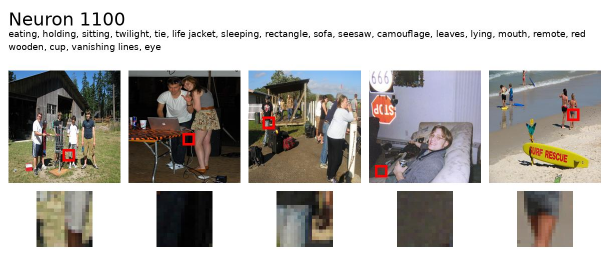}\hfill
\includegraphics[width=0.49\linewidth]{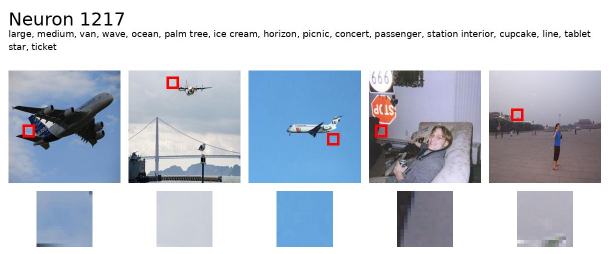}\\[0.6em]
\includegraphics[width=0.49\linewidth]{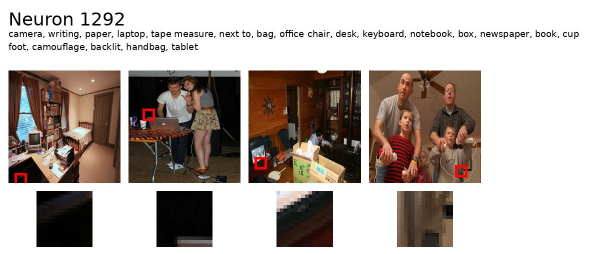}\hfill
\includegraphics[width=0.49\linewidth]{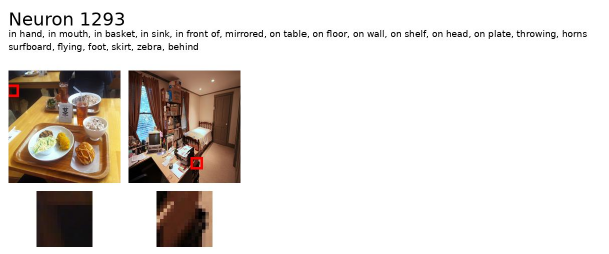}\\[0.6em]
\includegraphics[width=0.49\linewidth]{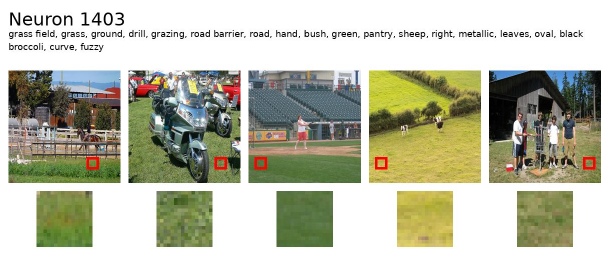}\hfill
\includegraphics[width=0.49\linewidth]{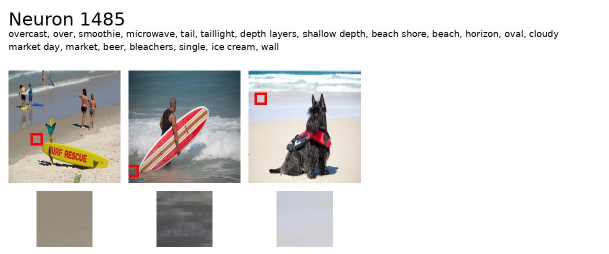}\\[0.6em]
\includegraphics[width=0.49\linewidth]{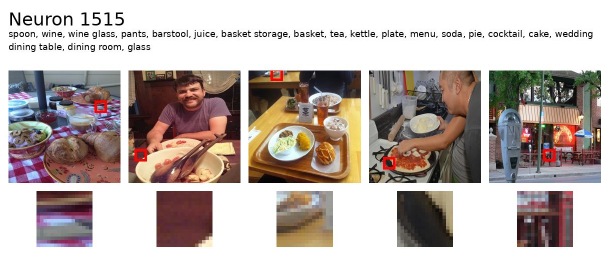}\hfill
\includegraphics[width=0.49\linewidth]{images/neuron_178.pdf}\\[0.6em]
\includegraphics[width=0.49\linewidth]{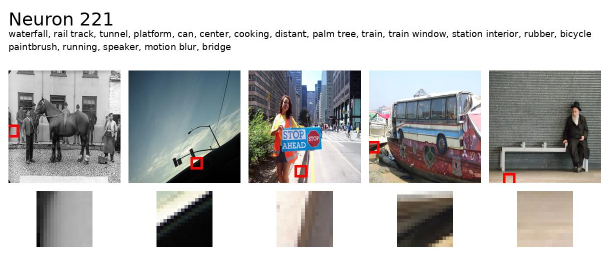}\hfill
\includegraphics[width=0.49\linewidth]{images/neuron_54.pdf}\\[0.6em]
\includegraphics[width=0.49\linewidth]{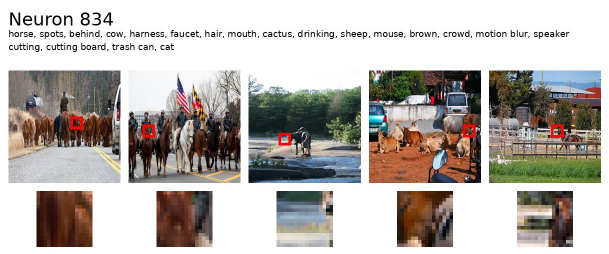}\hfill
\includegraphics[width=0.49\linewidth]{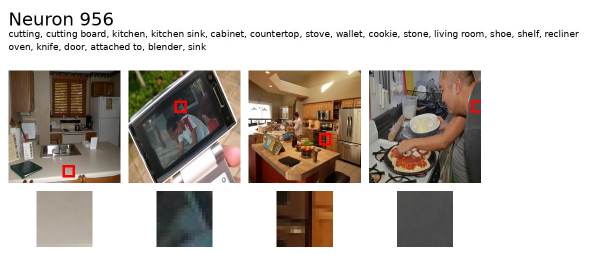}\\[0.6em]
\caption{Visualization of two randomly selected neurons (Extended version).}
\label{fig:neuron_visualize}
\end{figure}

\subsection{Additional Visualizations}
\label{app:additional_viz}

Table~\ref{tab:neuron_terms} shows the complete set of 12 randomly selected neurons with their semantic interpretations. Each neuron exhibits a dominant cluster of related concepts, demonstrating the diversity of learned representations. Figures~\ref{fig:all22} and~\ref{fig:neuron_visualize} provide extensive visualizations of concept activations and neuron-specific image examples. For each neuron, we show the top-scoring images (by peak activation) and highlight the most salient patch. These visualizations reveal that: (1) Neurons specialize for coherent semantic categories (e.g., Neuron 1515 consistently activates on dining scenes with tableware). (2) Spatial activations are precise: highlighted patches correspond to relevant objects or regions rather than backgrounds. (3) Neurons generalize across visual variations: Neuron 54 (aquatic activities) activates on diverse water-related scenes including lakes, rivers, boats, and swimming.

\begin{table*}[t]
\centering
\footnotesize
\setlength{\tabcolsep}{6pt}
\begin{tabular}{lcc}
\toprule
\textbf{Metric} & \textbf{Shared Only} ($\texttt{shared\_ratio}=1.0$) & \textbf{Shared+Private} ($\texttt{shared\_ratio}=0.25$) \\
\midrule
$R^2_{\mathrm{V,self}}(\textbf{full global})$ & 0.4045 & - \\
$R^2_{\mathrm{V,self}}(\textbf{full joint})$  & - & \textbf{0.4606} \\
$R^2_{\mathrm{V,self}}(\textbf{shared})$       & - & 0.2705 \\
$R^2_{\mathrm{V,self}}(\textbf{private})$         & - & 0.2445 \\
\addlinespace[2pt]
$R^2_{\mathrm{T,self}}(\textbf{full global})$ & 0.5737 & - \\
$R^2_{\mathrm{T,self}}(\textbf{full joint})$  & - & \textbf{0.6684} \\
$R^2_{\mathrm{T,self}}(\textbf{shared})$       & - & 0.5088 \\
$R^2_{\mathrm{T,self}}(\textbf{private})$         & - & 0.2550 \\
\midrule
$R^2_{\mathrm{V,cross}}(\textbf{shared})$       & - & \textbf{0.1171} \\
$R^2_{\mathrm{V,cross}}(\textbf{full global})$ & 0.0552 & - \\
$R^2_{\mathrm{V,cross}}(\textbf{full joint})$  & - & 0.1131 \\
\addlinespace[2pt]
$R^2_{\mathrm{T,cross}}(\textbf{shared})$       & - & \textbf{0.4055} \\
$R^2_{\mathrm{T,cross}}(\textbf{full global})$ & 0.3220 & - \\
$R^2_{\mathrm{T,cross}}(\textbf{full joint})$  & - & 0.3971 \\
\bottomrule
\end{tabular}
\caption{\textbf{R$^2$ decomposition under shared only vs shared+private.}
Self R$^2$ is reported for vision (V) and text (T) under four code paths: shared, private, full joint (shared Top-$k$ + private Top-$k$), and full global (global Top-$k$ across all dims; single-dict baseline).
Cross R$^2$ uses the same transport plan $P_{\mathrm{sh}}$ derived from shared-only codes; $\Delta_{\mathrm{leak}}$ measures the change when decoding with full codes vs shared-only under fixed $P_{\mathrm{sh}}$.}
\label{tab:r2_appendix_extensive}
\end{table*}

\begin{figure}[h]
\centering
\caption{Examples of diverse concepts discovered by LUCID on COCO images (Extended version).}
\label{fig:all22}

\includegraphics[width=0.49\linewidth]{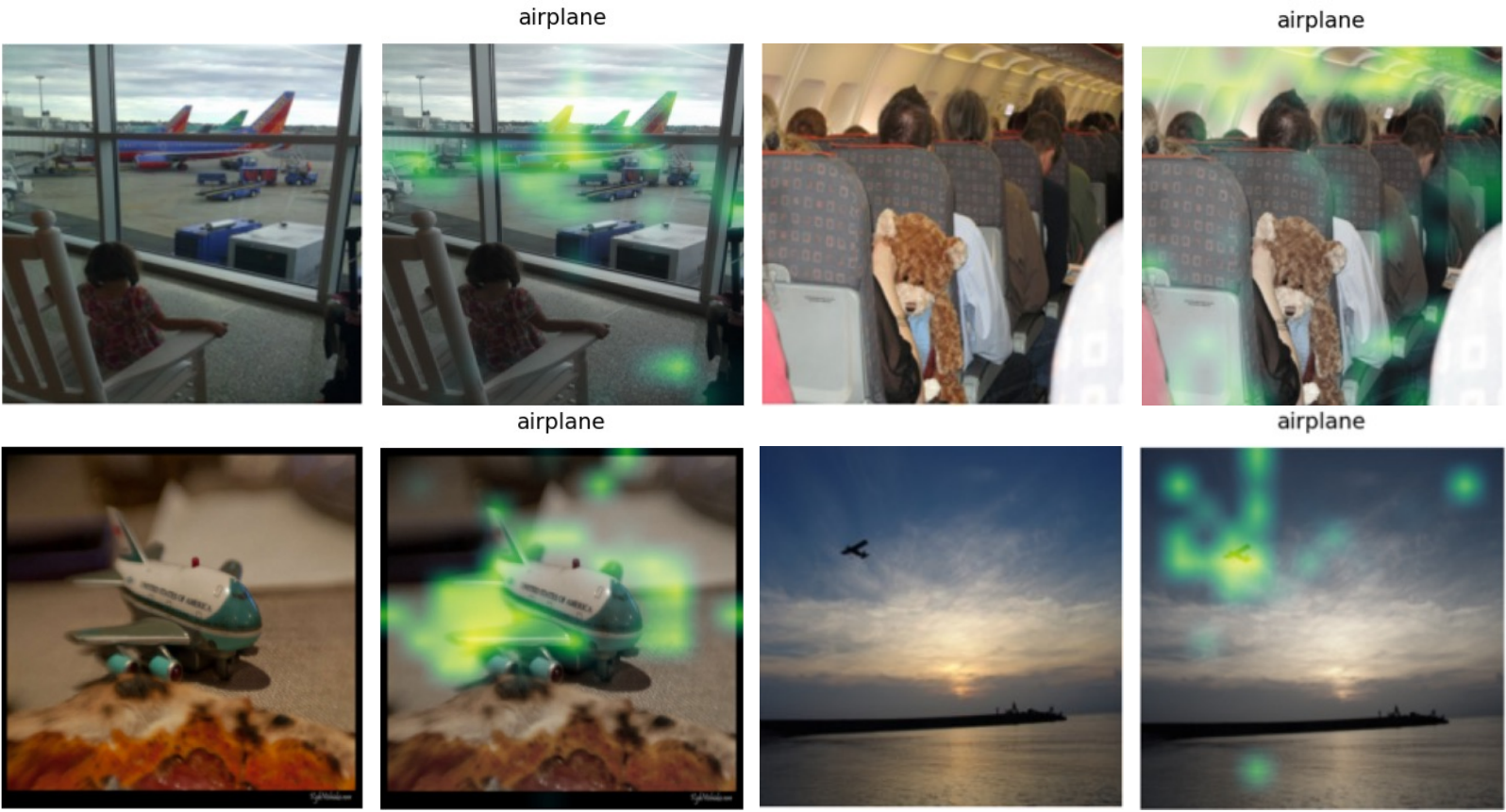}\hfill
\includegraphics[width=0.49\linewidth]{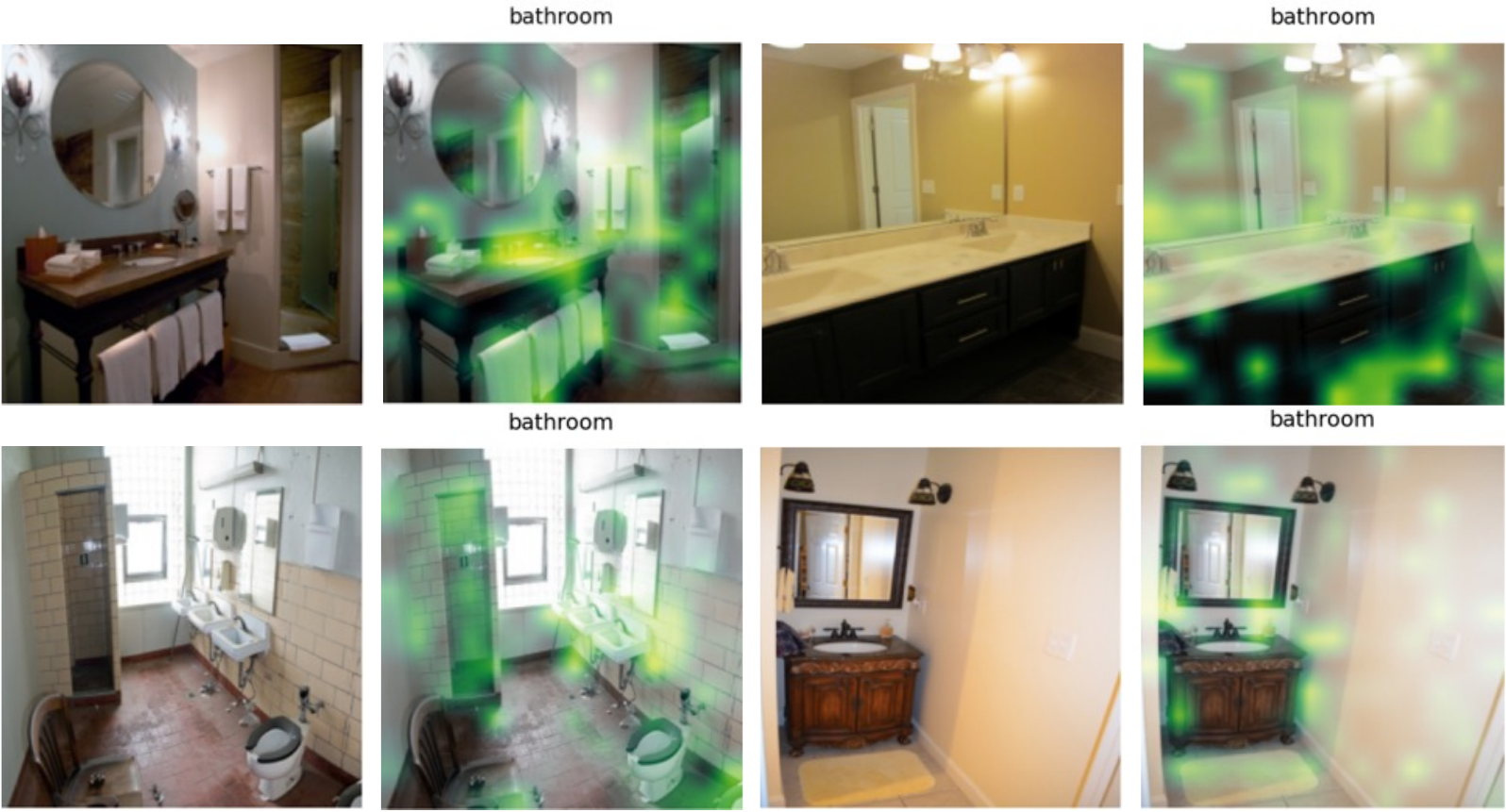}\\[0.6em]
\includegraphics[width=0.49\linewidth]{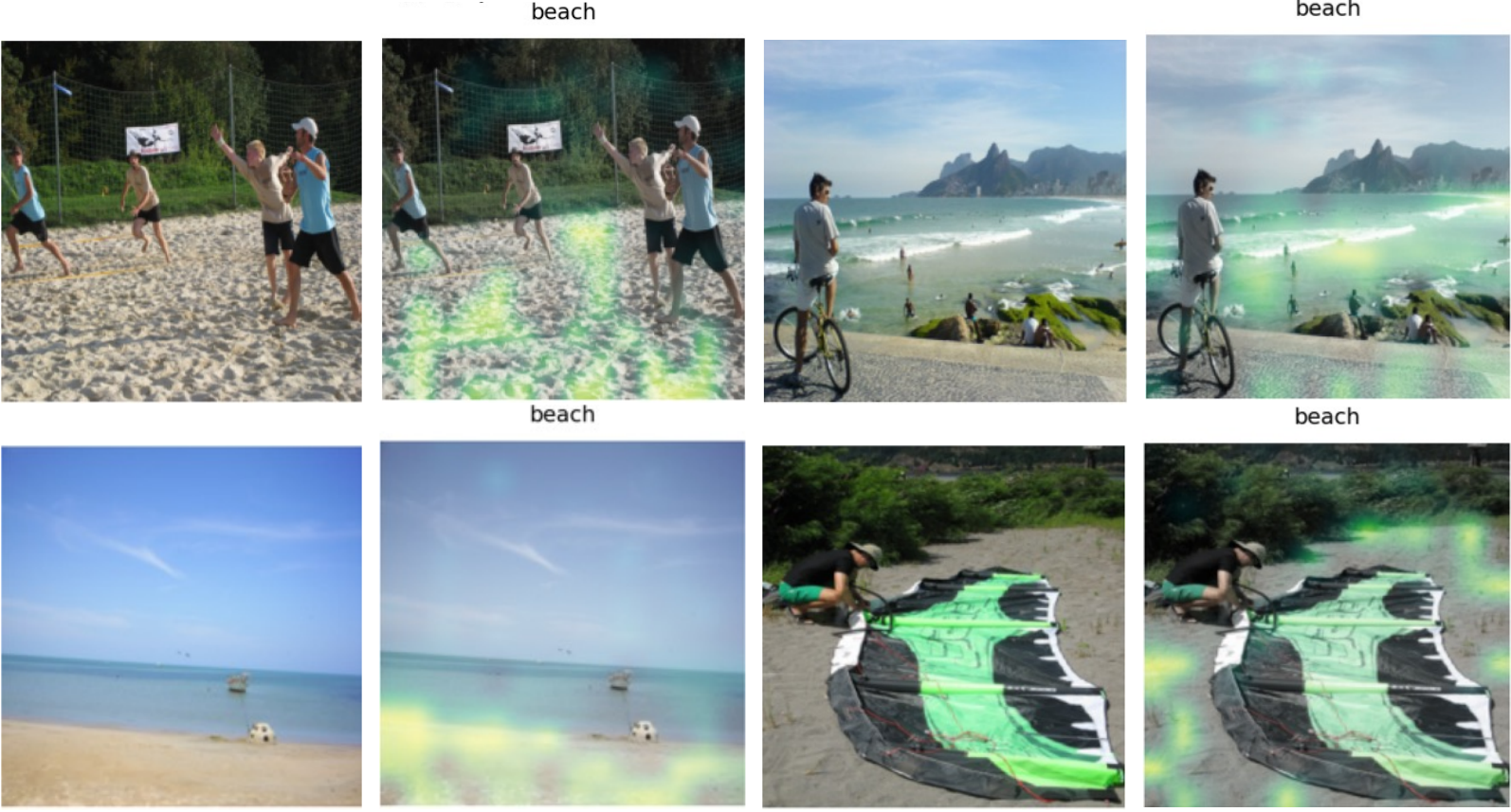}\hfill
\includegraphics[width=0.49\linewidth]{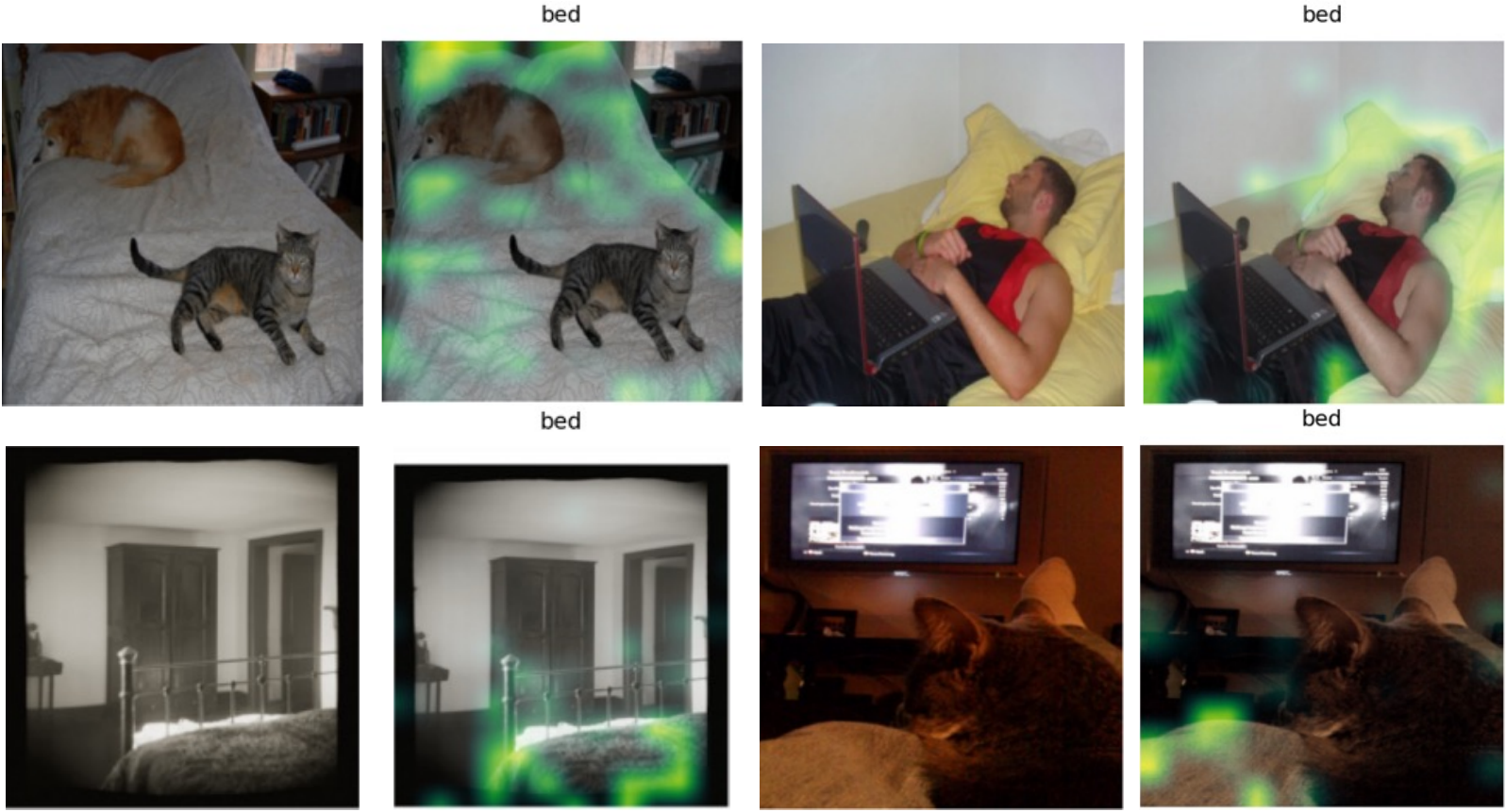}\\[0.6em]
\includegraphics[width=0.49\linewidth]{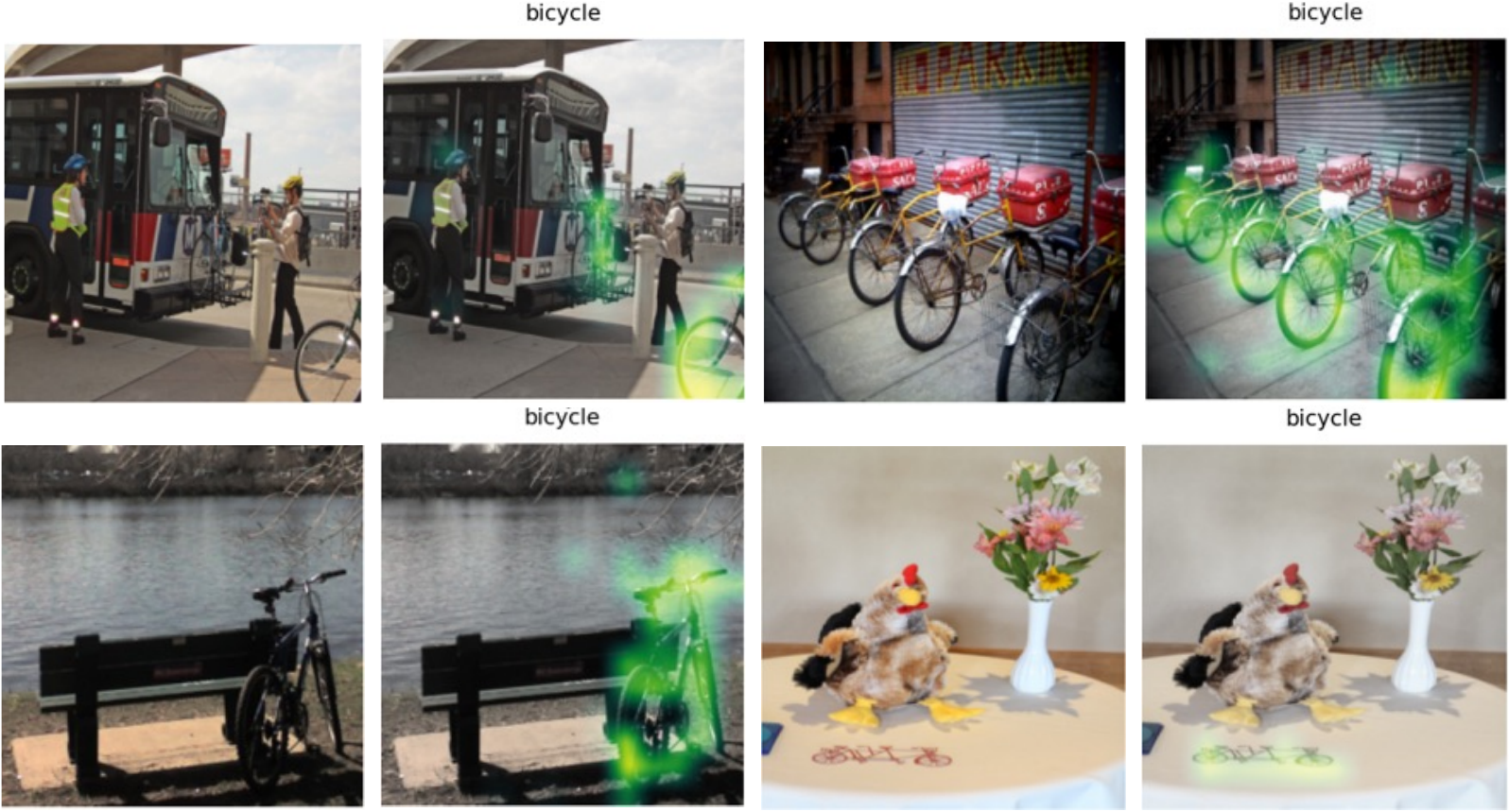}\hfill
\includegraphics[width=0.49\linewidth]{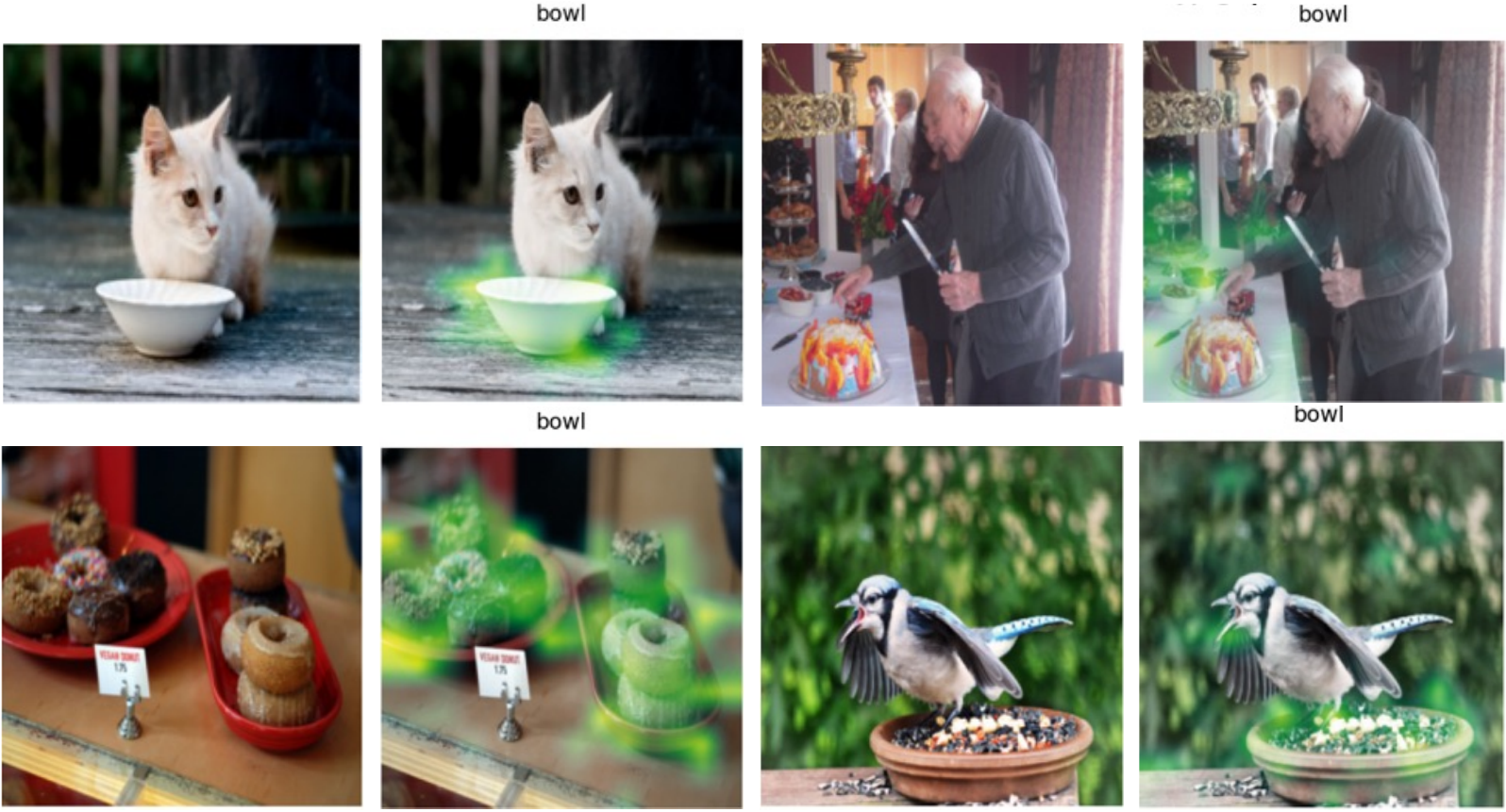}\\[0.6em]
\includegraphics[width=0.49\linewidth]{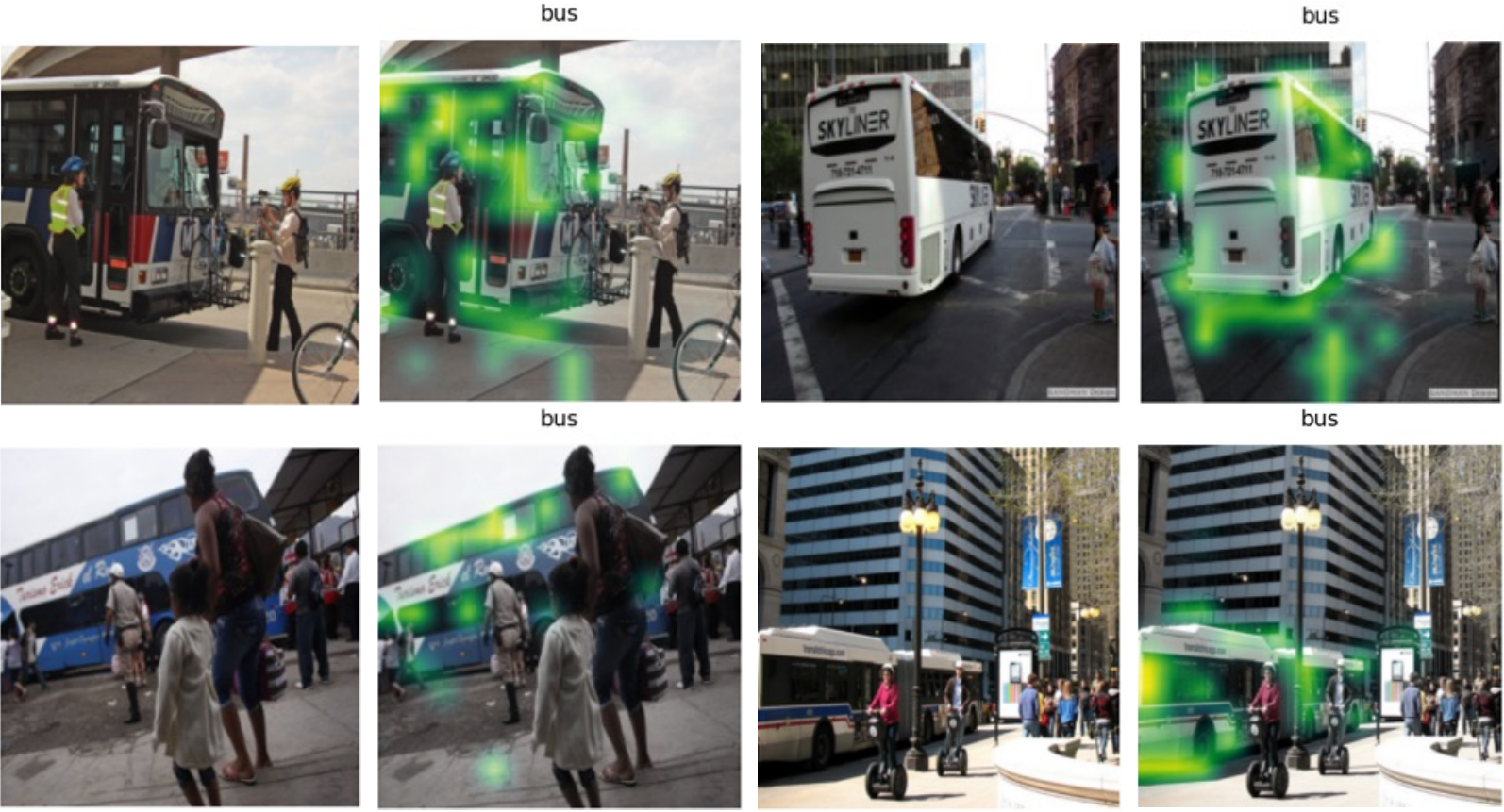}\hfill
\includegraphics[width=0.49\linewidth]{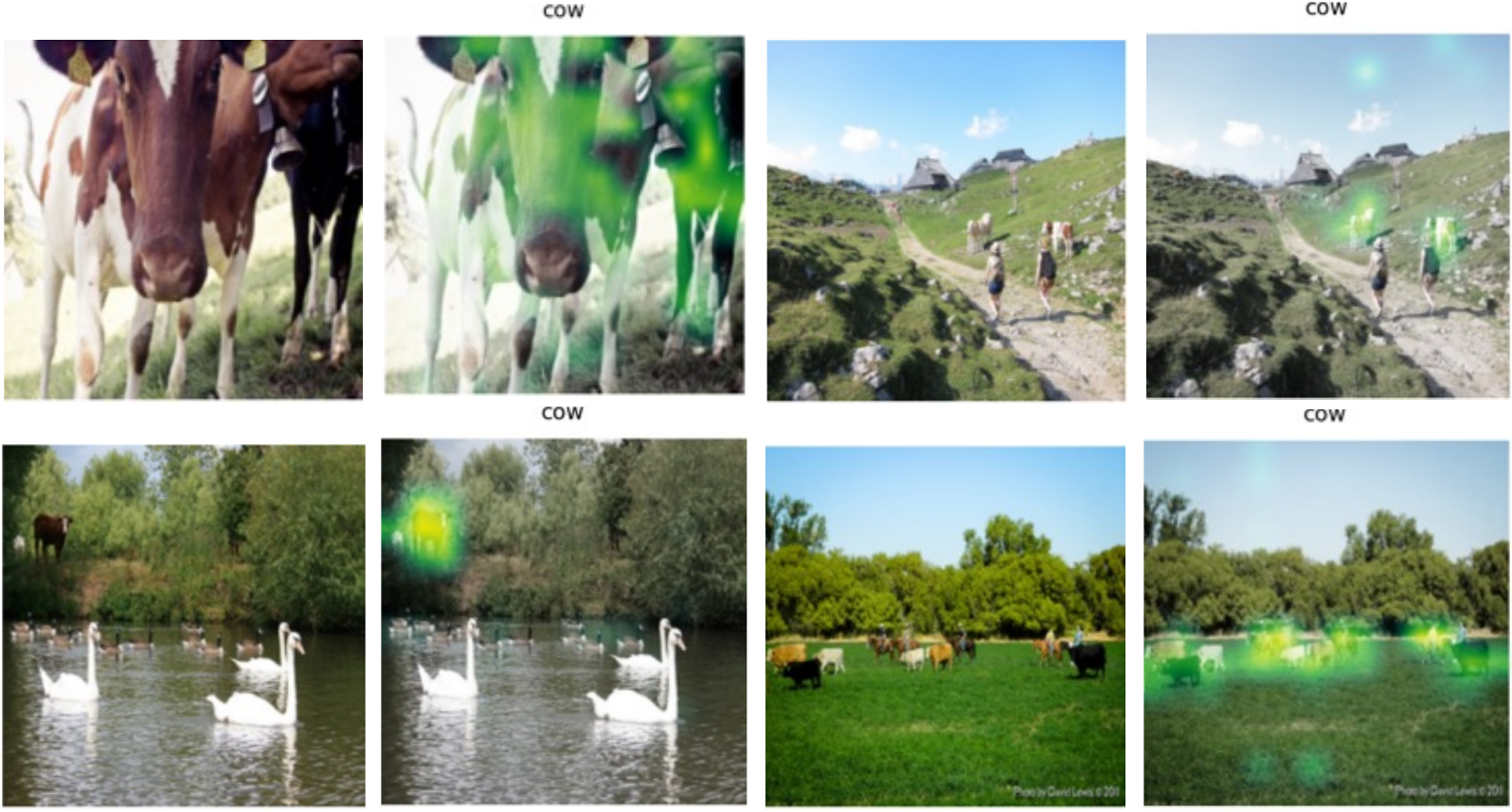}\\[0.6em]
\end{figure}

\begin{figure}[p]\ContinuedFloat
\centering
\captionsetup{list=off} 


\includegraphics[width=0.49\linewidth]{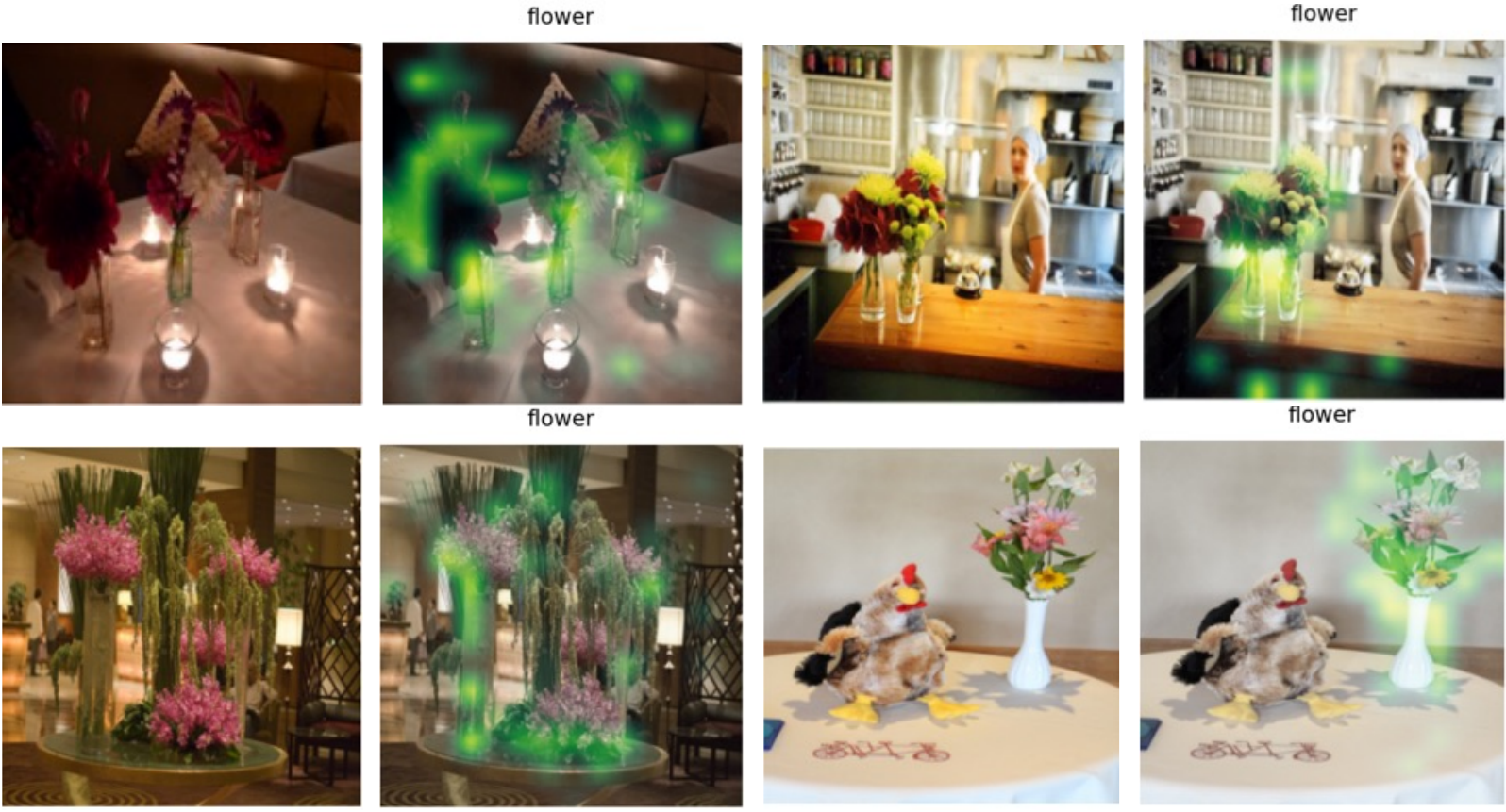}\hfill
\includegraphics[width=0.49\linewidth]{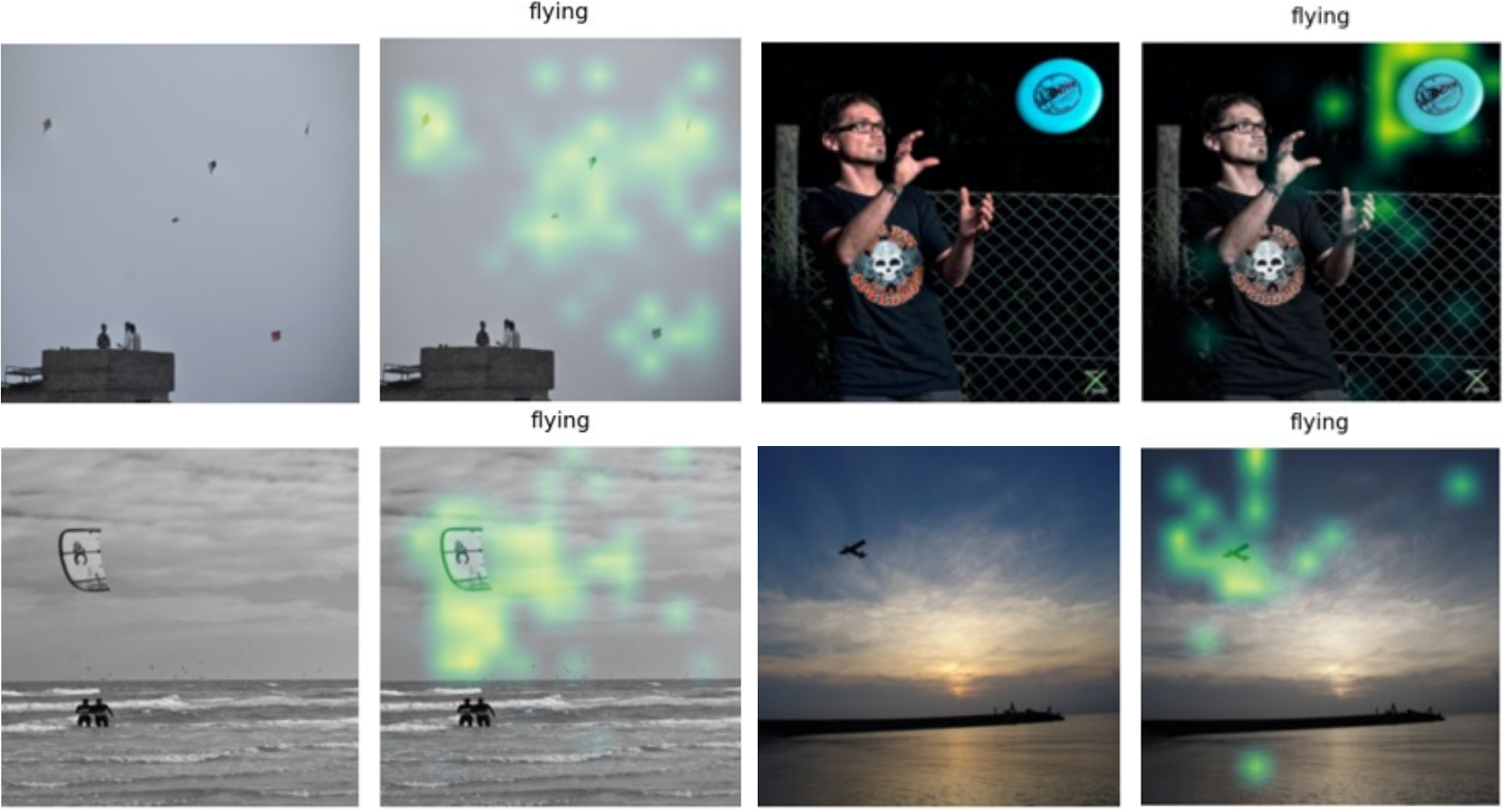}\\[0.6em]
\includegraphics[width=0.49\linewidth]{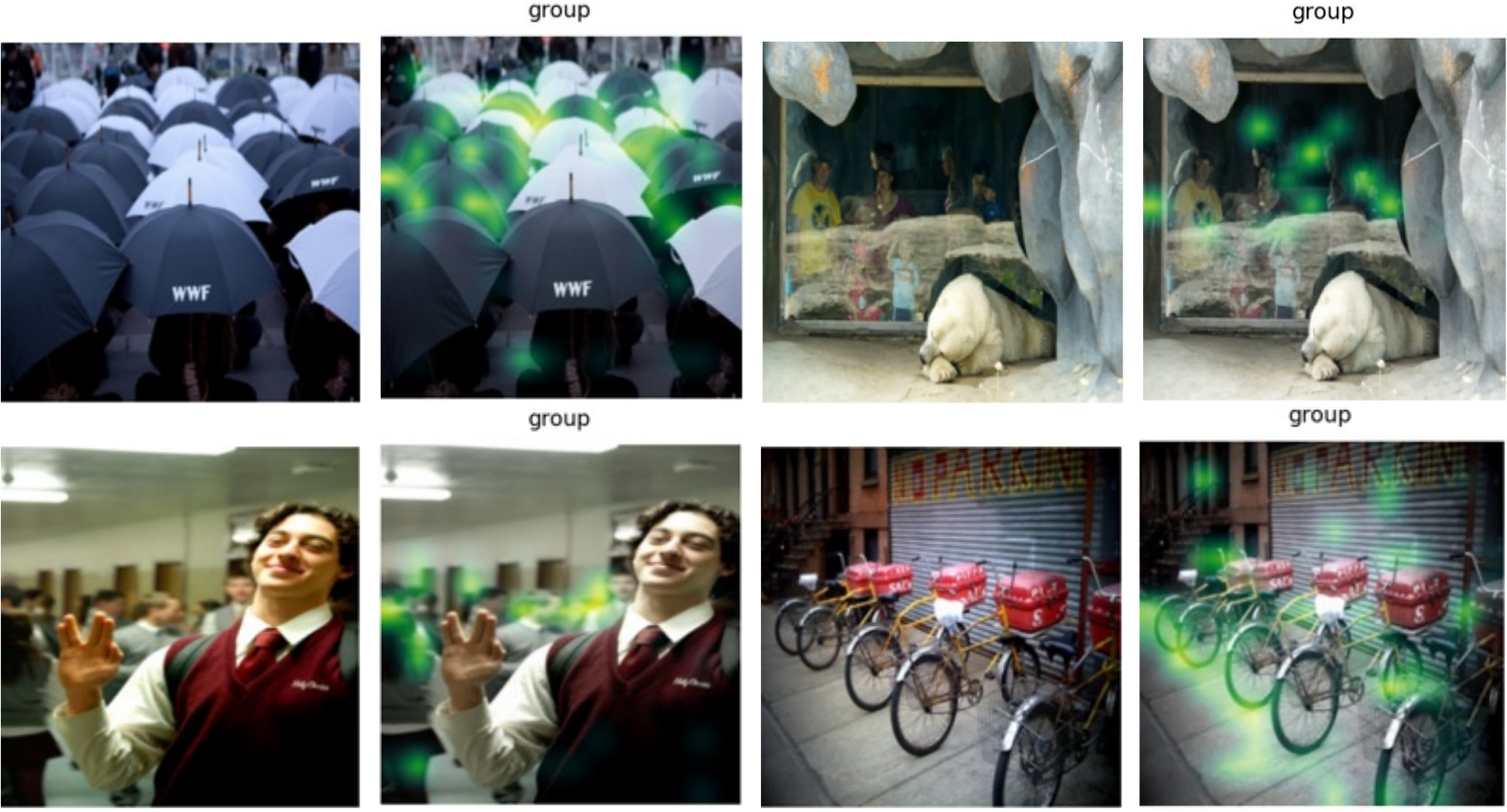}\hfill
\includegraphics[width=0.49\linewidth]{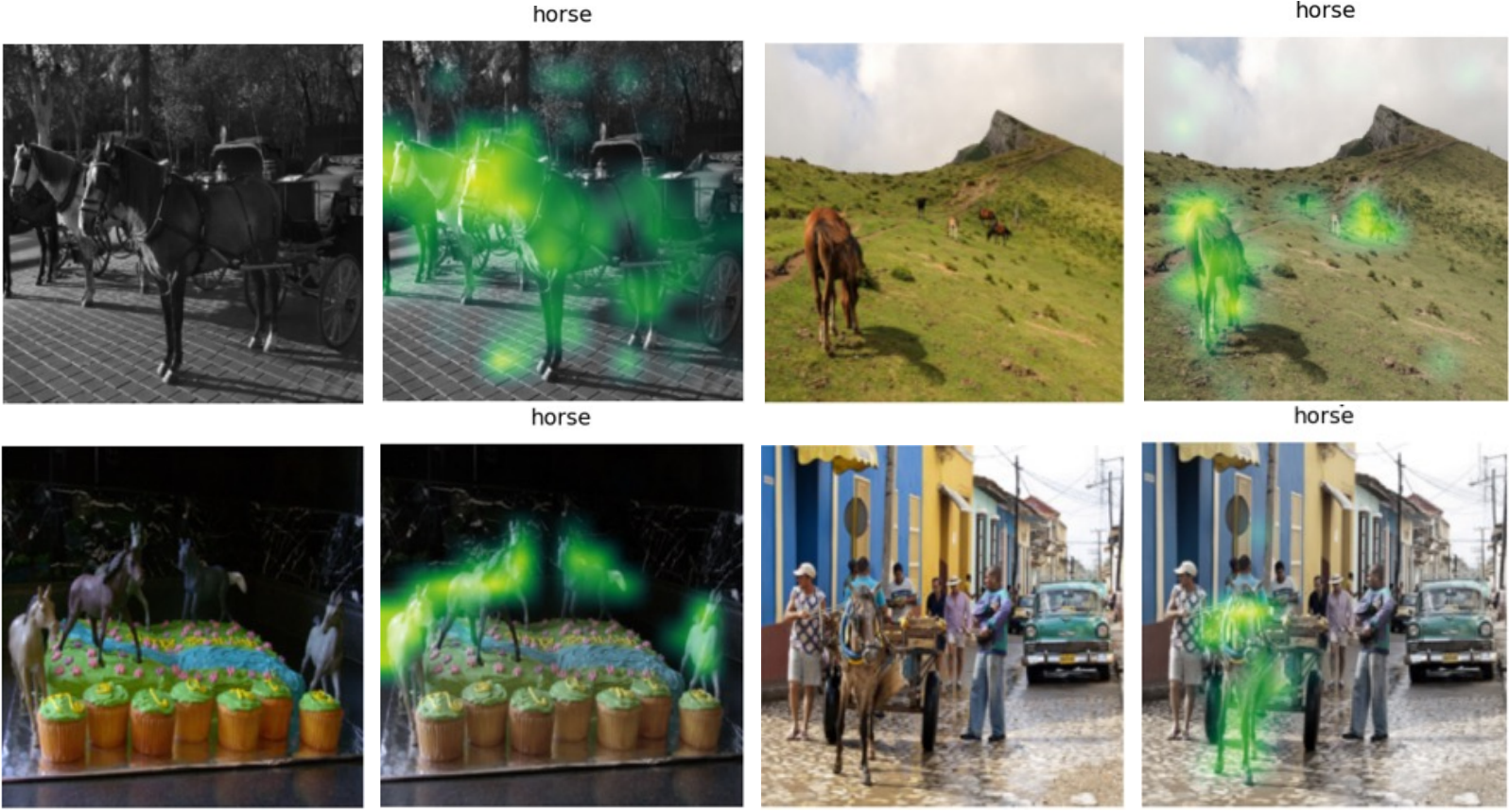}\\[0.6em]
\includegraphics[width=0.49\linewidth]{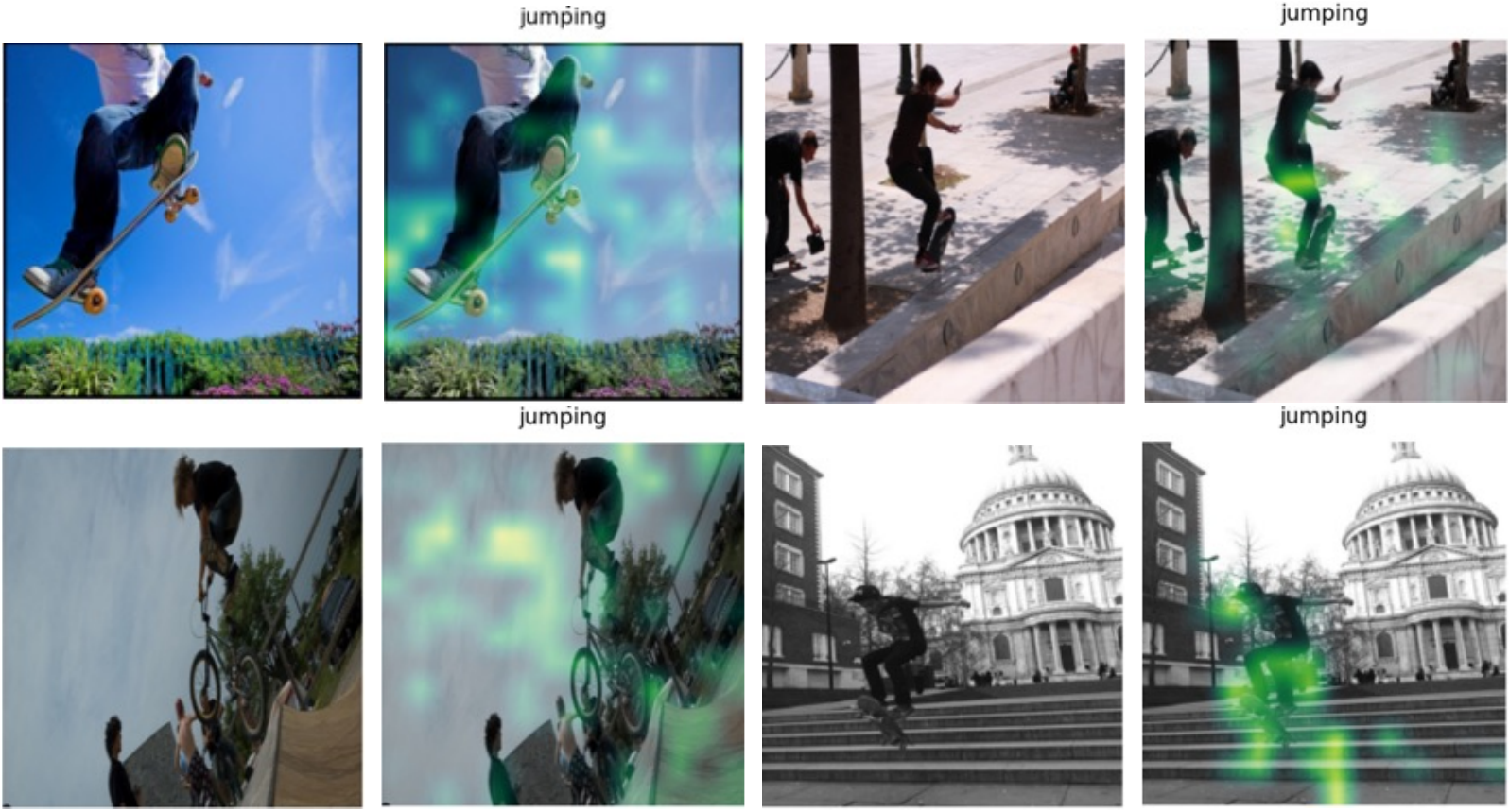}\hfill
\includegraphics[width=0.49\linewidth]{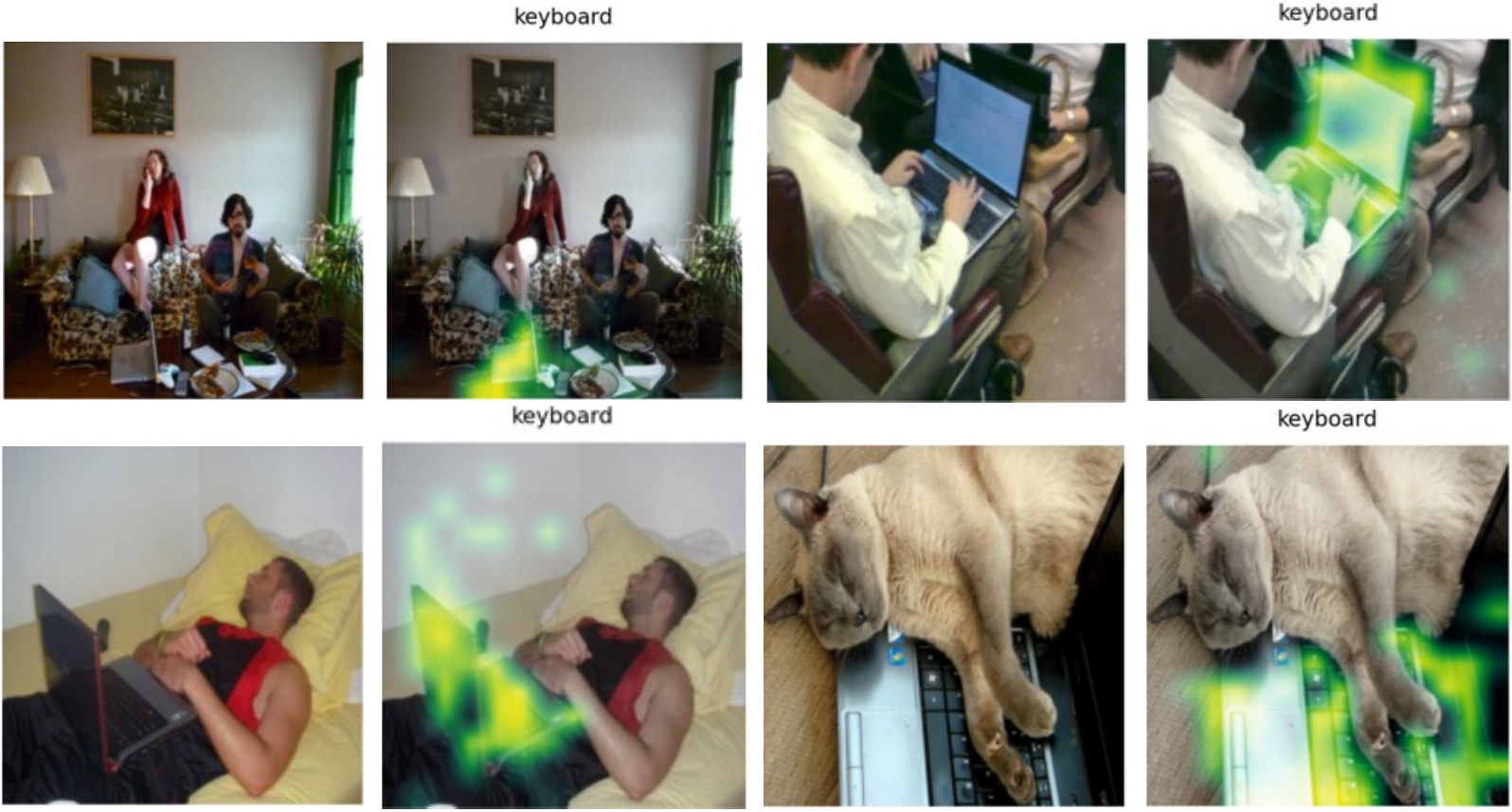}\\[0.6em]
\includegraphics[width=0.49\linewidth]{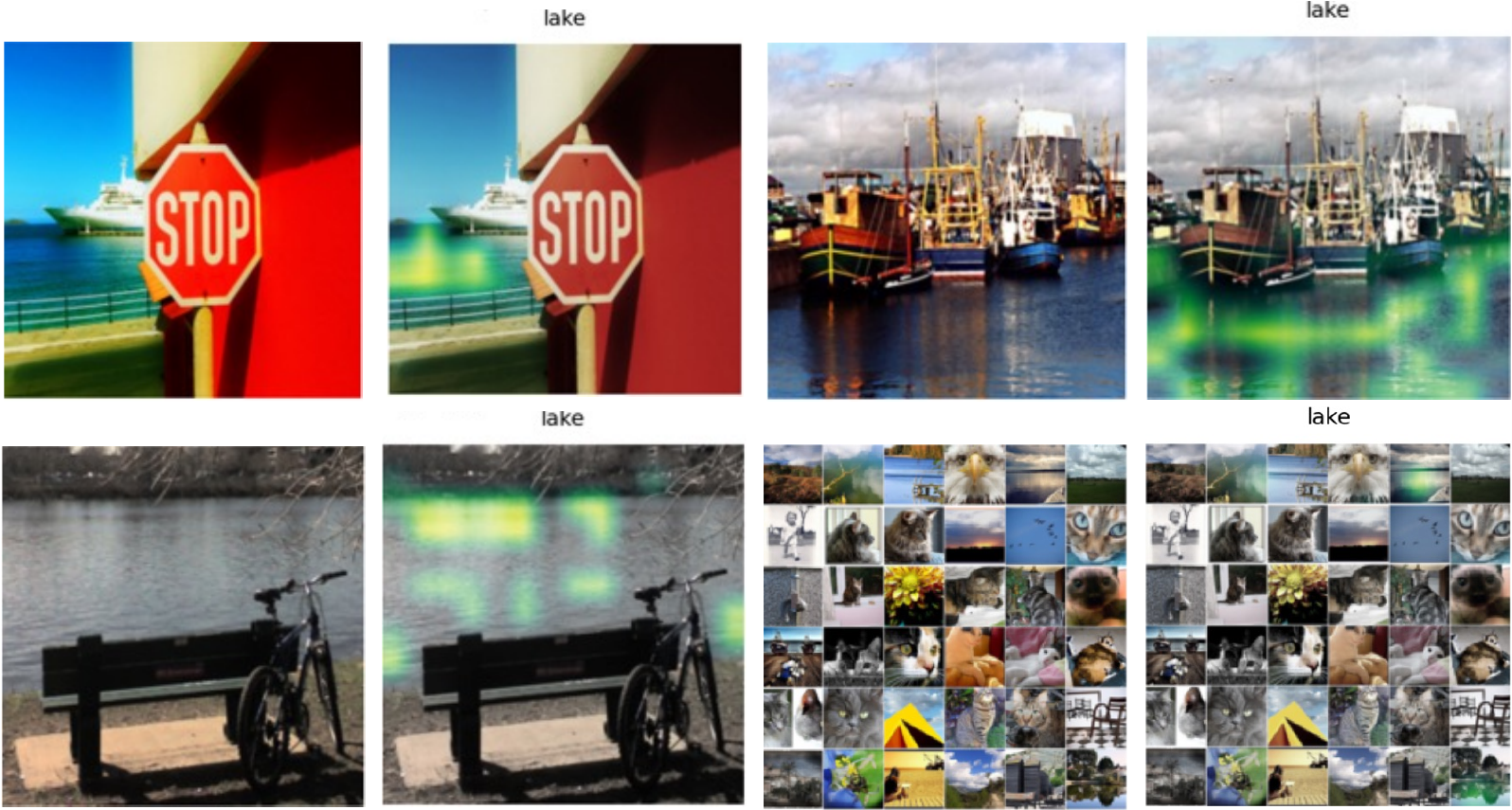}\hfill
\includegraphics[width=0.49\linewidth]{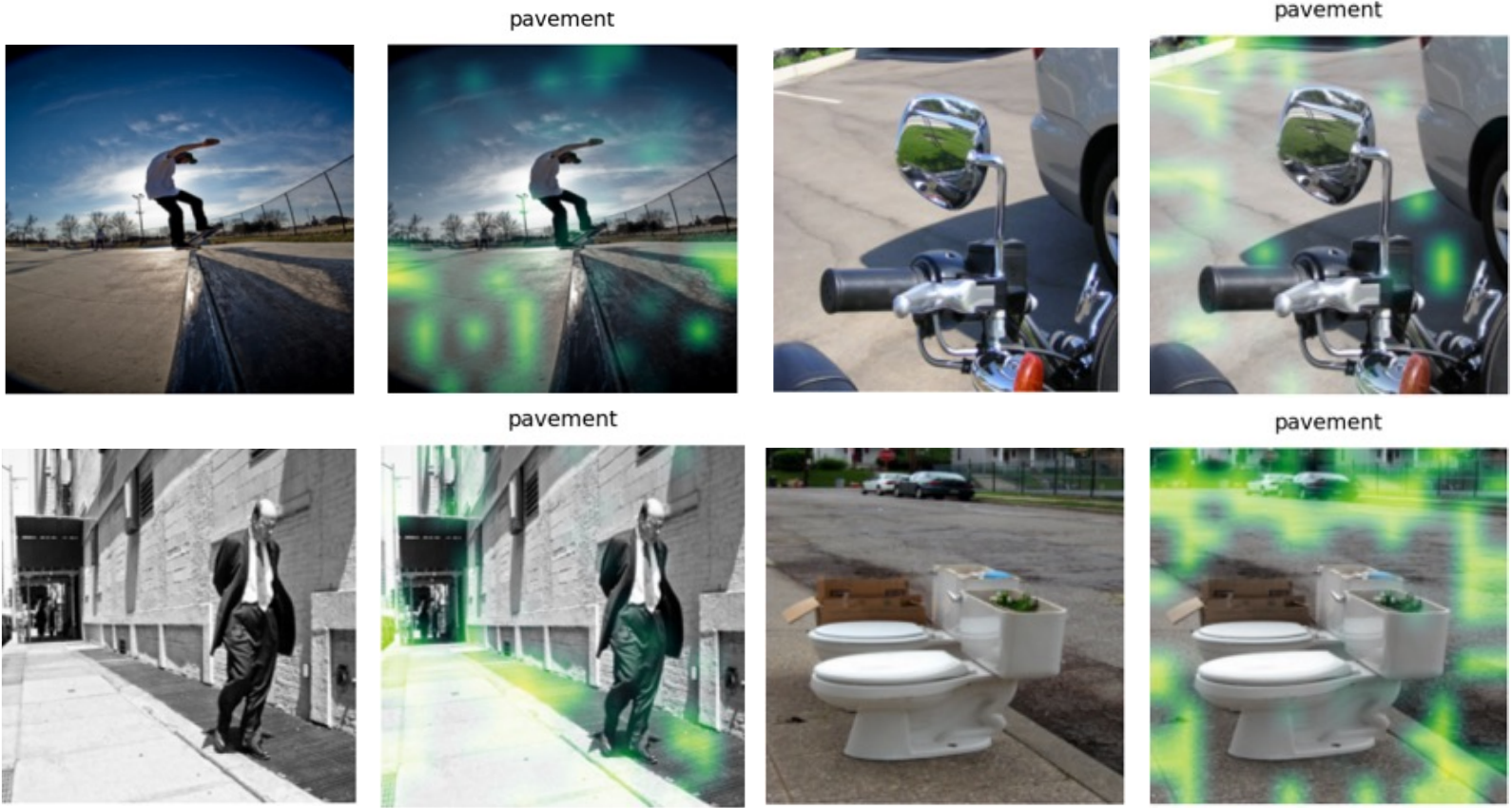}\\[0.6em]
\end{figure}

\begin{figure}[p]\ContinuedFloat
\centering
\captionsetup{list=off}

\includegraphics[width=0.49\linewidth]{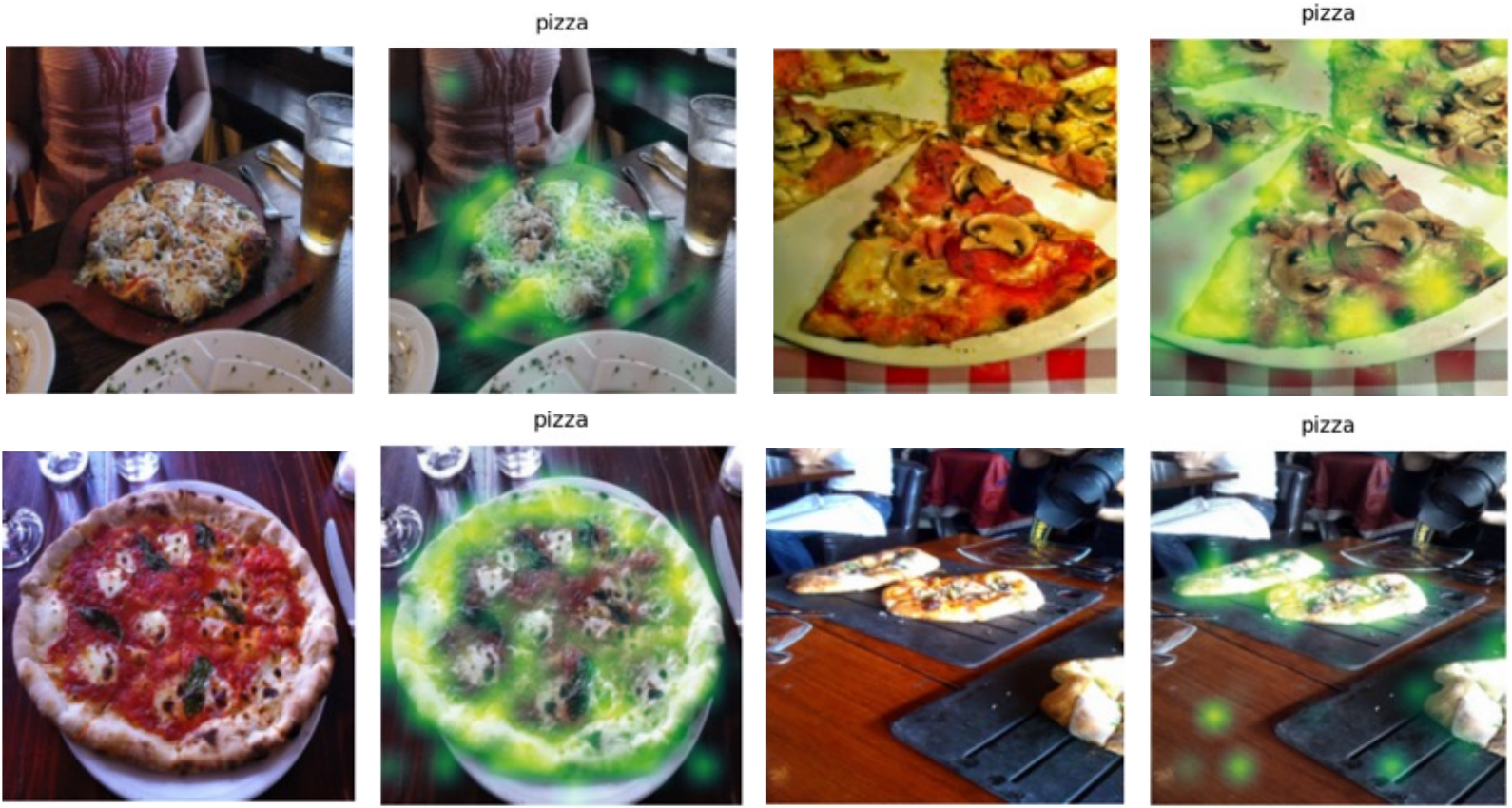}\hfill
\includegraphics[width=0.49\linewidth]{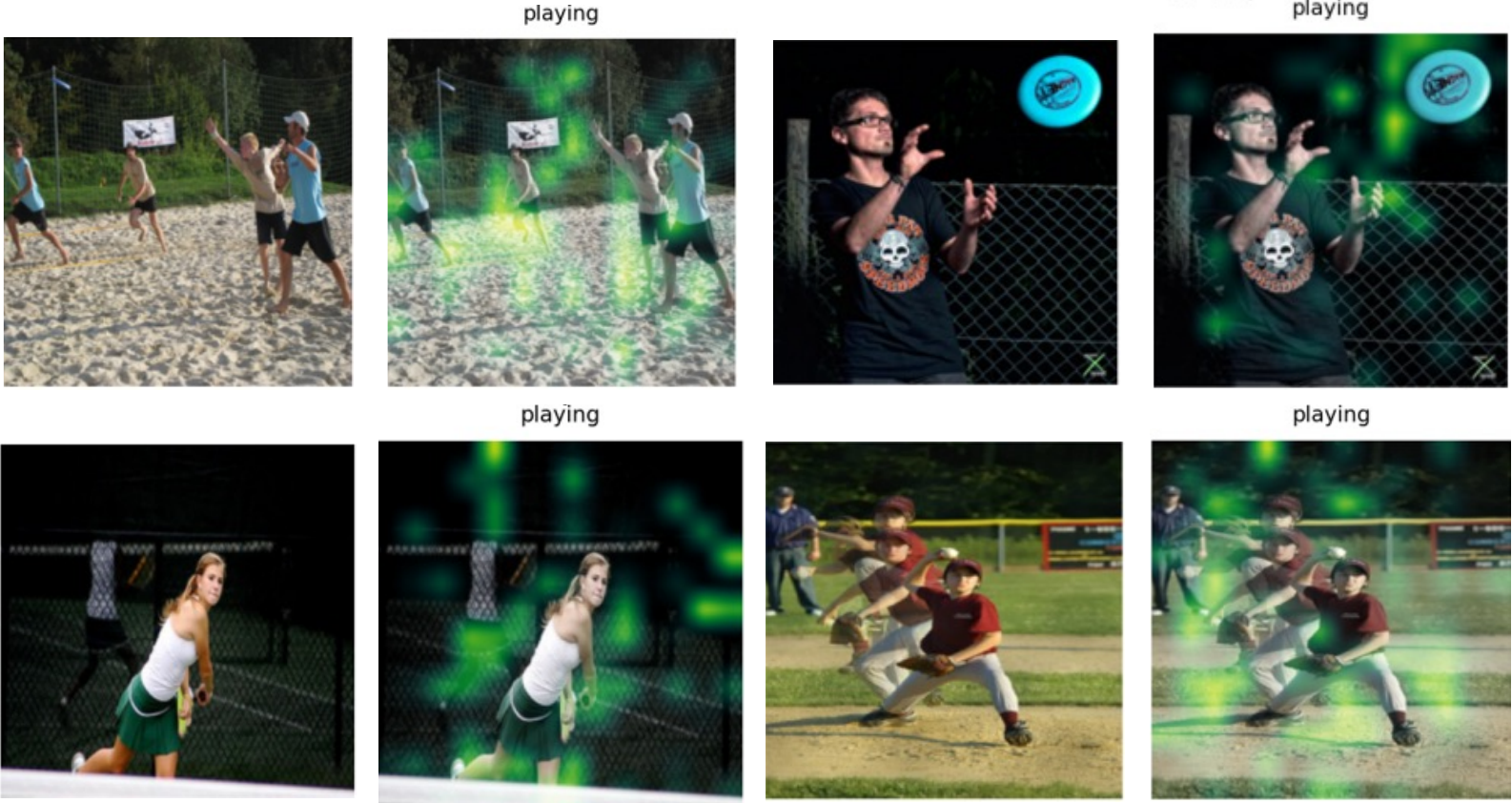}\\[0.6em]
\includegraphics[width=0.49\linewidth]{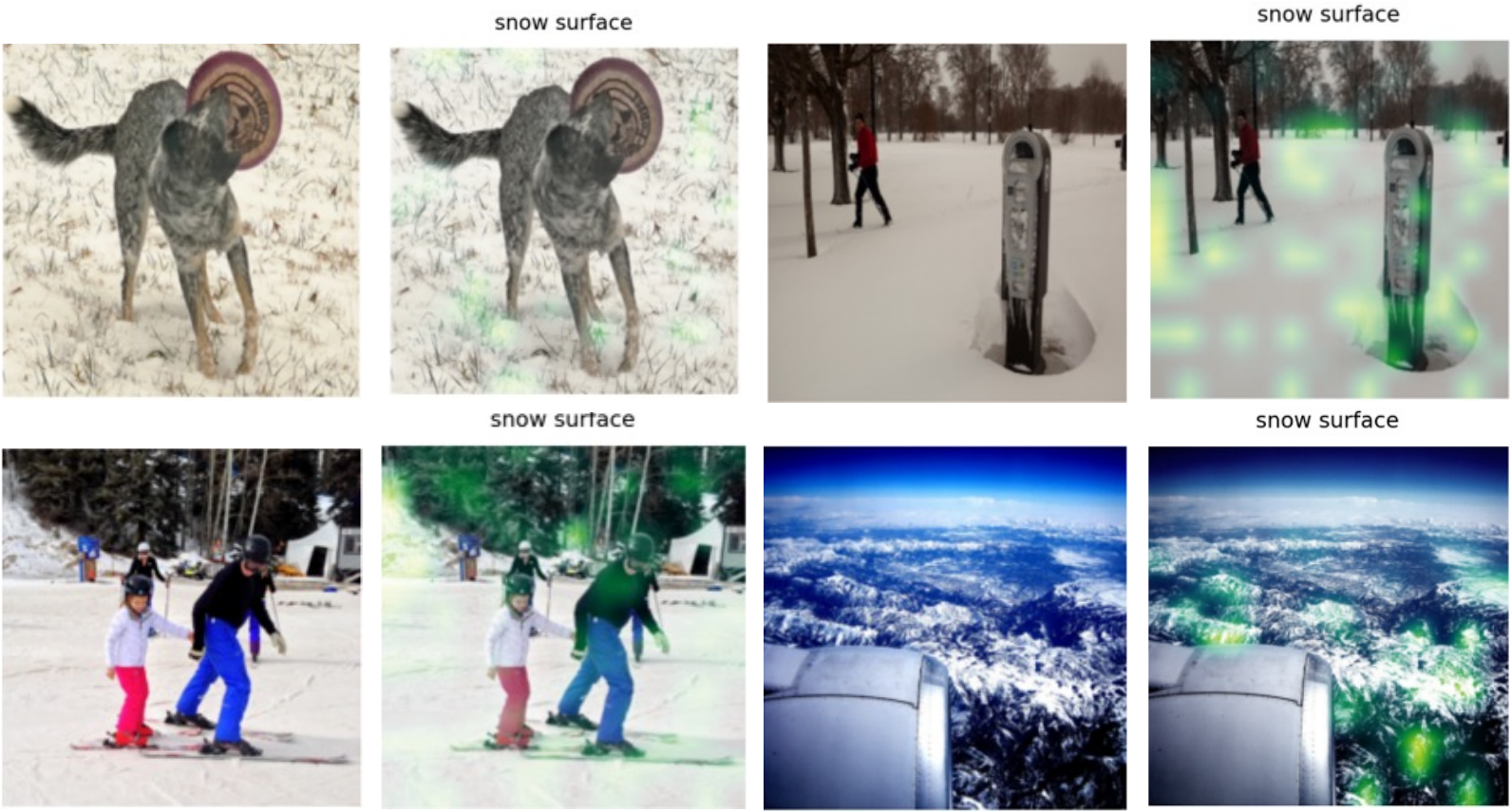}\hfill
\includegraphics[width=0.49\linewidth]{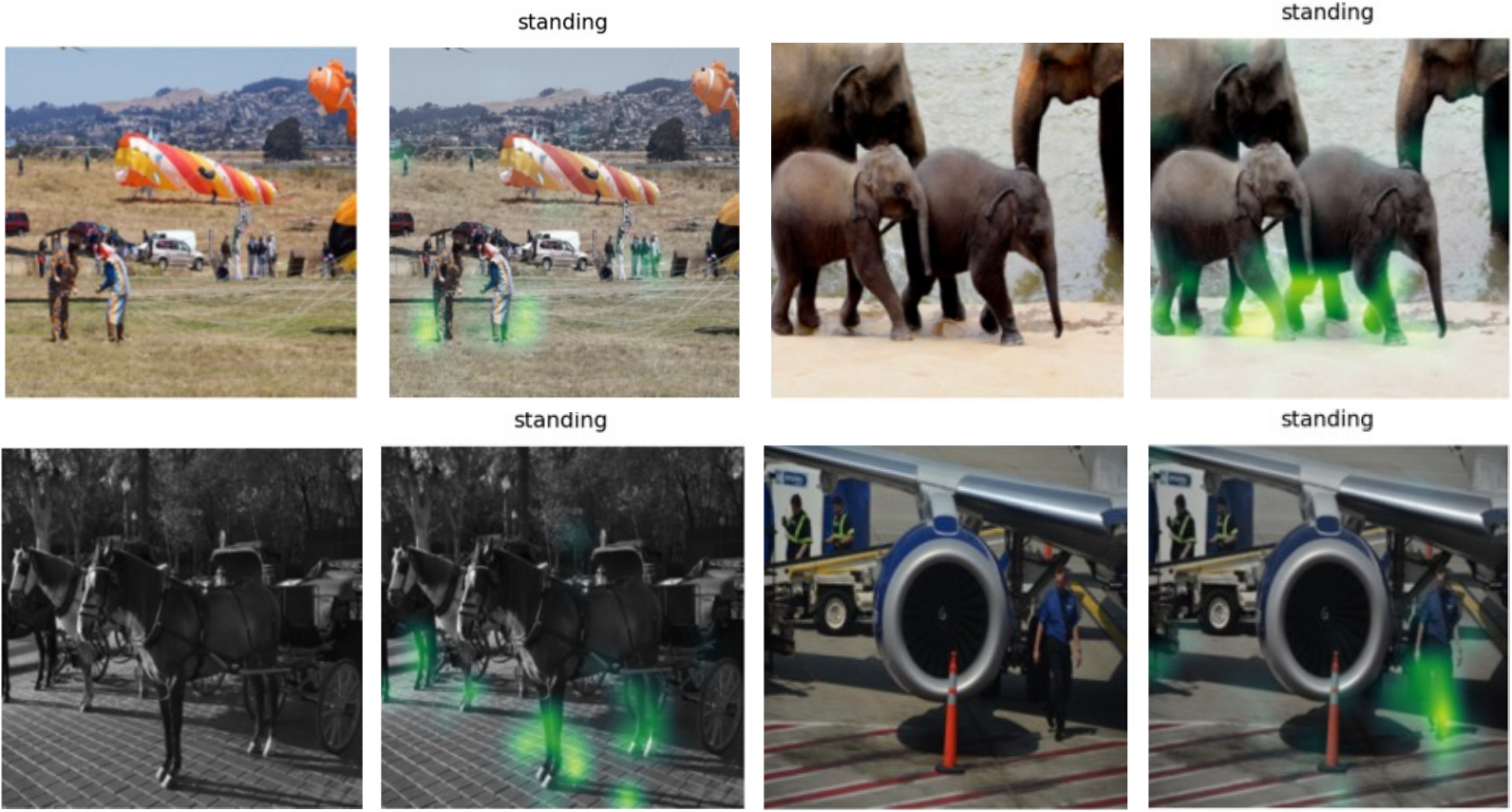}\\[0.6em]
\includegraphics[width=0.49\linewidth]{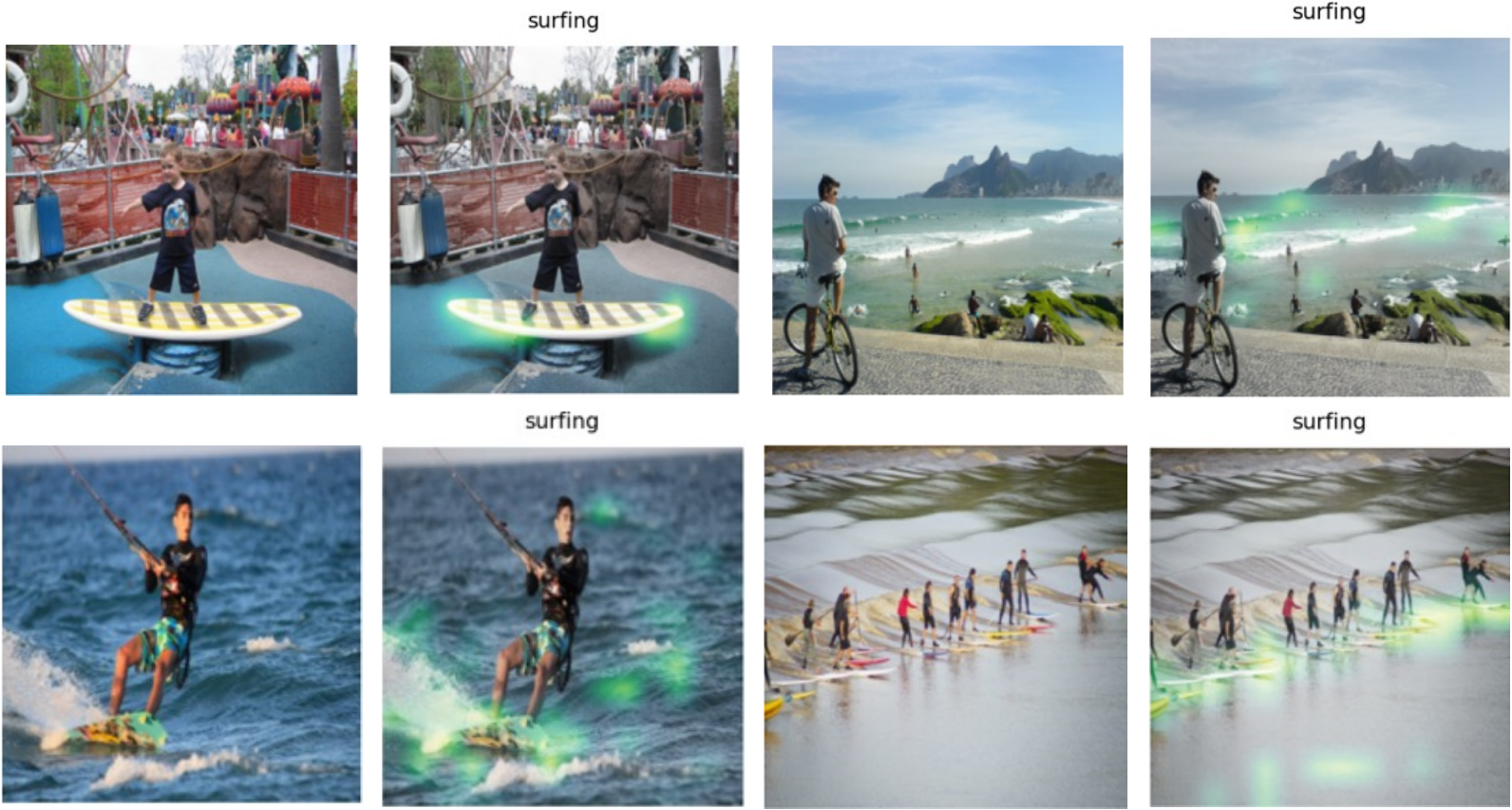}\hfill
\includegraphics[width=0.49\linewidth]{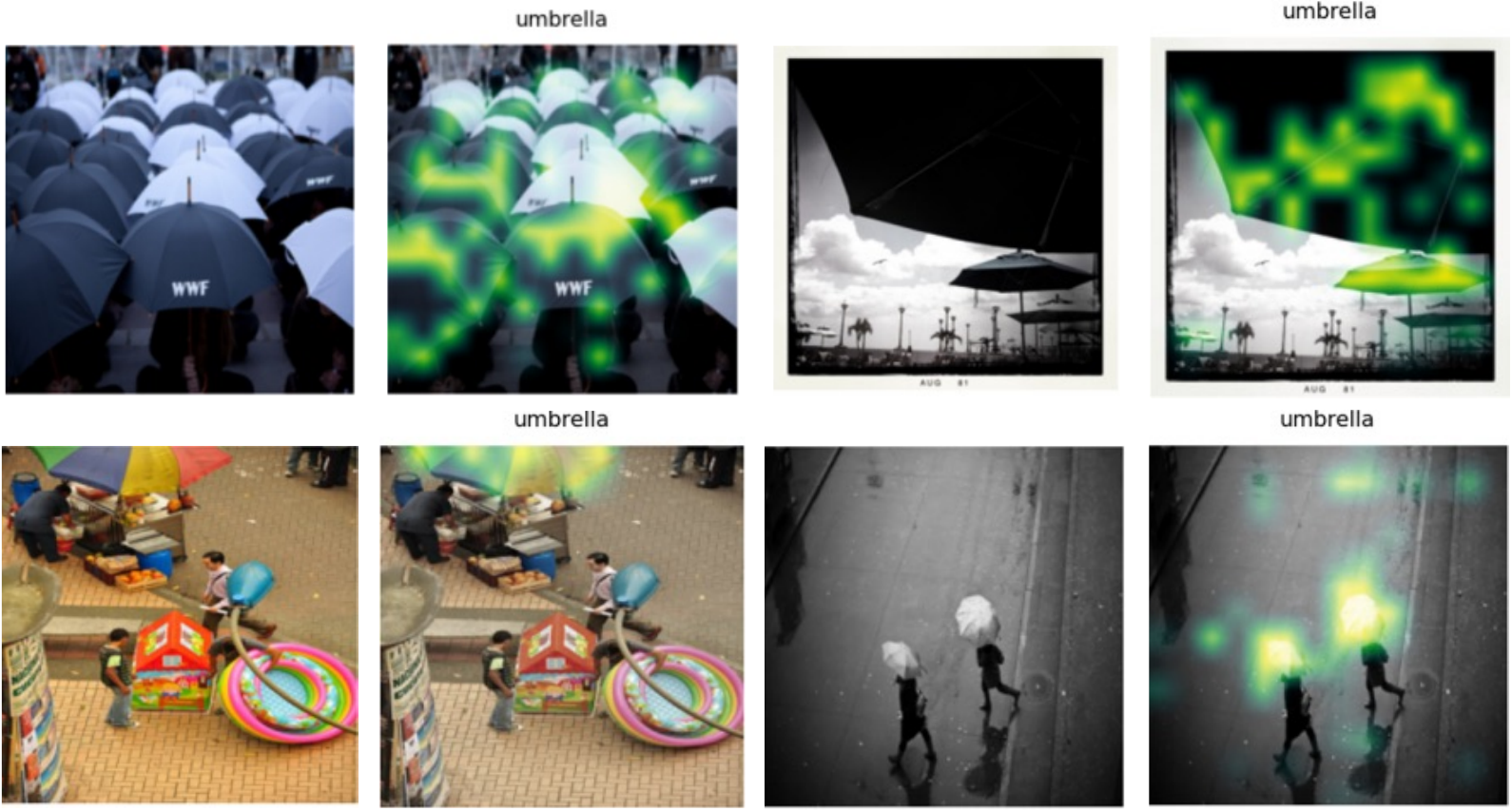}\\[0.6em]
\end{figure}

\subsection{Omitted Proofs}
\subsubsection{Proof of Theorem~\ref{thm:sinkhorn}}
\begin{proof}
We proceed via the method of Lagrange multipliers.

\textbf{Step 1: Lagrangian formulation.}
The problem involves equality constraints for marginals and non-negativity constraints $\Pi_{ij} \ge 0$. Introducing dual variables $\lambda \in \mathbb{R}^n$ and $\mu \in \mathbb{R}^m$ for the marginal constraints, the Lagrangian is:
\begin{equation*}
\mathcal{L}(\Pi, \lambda, \mu) = \sum_{i,j} C_{ij} \Pi_{ij} + \varepsilon \sum_{i,j} \Pi_{ij} \log \Pi_{ij} + \sum_i \lambda_i \left( r_i - \sum_j \Pi_{ij} \right) + \sum_j \mu_j \left( c_j - \sum_i \Pi_{ij} \right).
\end{equation*}

\textbf{Step 2: First-order optimality.}
Due to the singularity of the gradient of the entropy term at zero ($\lim_{x \to 0^+} \log x = -\infty$), the optimal solution is strictly positive ($\Pi_{ij}^\star > 0$). Consequently, the non-negativity constraints are inactive. Taking the partial derivative with respect to $\Pi_{ij}$ and setting it to zero yields:
\begin{equation*}
\frac{\partial \mathcal{L}}{\partial \Pi_{ij}} = C_{ij} + \varepsilon (\log \Pi_{ij} + 1) - \lambda_i - \mu_j = 0.
\end{equation*}
Rearranging terms:
\begin{equation*}
\log \Pi_{ij} = \frac{\lambda_i - \varepsilon}{\varepsilon} + \frac{\mu_j}{\varepsilon} - \frac{C_{ij}}{\varepsilon}.
\end{equation*}

\textbf{Step 3: Exponentiating.}
Exponentiating both sides results in:
\begin{equation*}
\Pi_{ij}^\star = \exp\left( \frac{\lambda_i - \varepsilon}{\varepsilon} \right) \cdot \exp\left( \frac{\mu_j}{\varepsilon} \right) \cdot \exp\left( -\frac{C_{ij}}{\varepsilon} \right).
\end{equation*}
By defining the scaling variables $u_i := \exp\left( \frac{\lambda_i - \varepsilon}{\varepsilon} \right)$, $v_j := \exp\left( \frac{\mu_j}{\varepsilon} \right)$, and the kernel $K_{ij} := \exp\left( -\frac{C_{ij}}{\varepsilon} \right)$, we obtain the factorization:
\begin{equation*}
\Pi_{ij}^\star = u_i \, K_{ij} \, v_j,
\end{equation*}
which corresponds to $\Pi^\star = \mathrm{diag}(u) \, K \, \mathrm{diag}(v)$ in matrix form.

\textbf{Step 4: Determining scaling vectors.}
Substituting $\Pi^\star$ into the marginal constraints:
\begin{align*}
\text{Row constraint:} \quad & \Pi^\star \mathbf{1}_m = r \implies \mathrm{diag}(u) K v = r \implies u = r \oslash (Kv). \\
\text{Column constraint:} \quad & (\Pi^\star)^\top \mathbf{1}_n = c \implies \mathrm{diag}(v) K^\top u = c \implies v = c \oslash (K^\top u).
\end{align*}

\textbf{Step 5: Uniqueness.}
The objective function is strictly convex on the transportation polytope due to the entropy term. Thus, the optimal plan $\Pi^\star$ is unique. Since $K$ is strictly positive, the scaling vectors $(u, v)$ exist and are unique up to a scalar factor (Sinkhorn's Theorem).
\end{proof}

\begin{remark}
The scaling vectors relate to Kantorovich dual potentials via $u_i = \exp(\phi_i / \varepsilon)$, $v_j = \exp(\psi_j / \varepsilon)$. The ambiguity $(u, v) \mapsto (\alpha u, v/\alpha)$ corresponds to the gauge freedom $(\phi, \psi) \mapsto (\phi + t, \psi - t)$.
\end{remark}

\subsubsection{Linear Convergence of Sinkhorn Algorithm}
\begin{corollary}[Sinkhorn Algorithm]
\label{cor:sinkhorn}
The scaling vectors $(u, v)$ in Theorem~\ref{thm:sinkhorn} can be computed via the Sinkhorn-Knopp iteration. Initializing $v^{(0)} = \mathbf{1}$, the updates for $t = 0, 1, 2, \ldots$ are:
\begin{equation}
u^{(t+1)} = r_n \oslash (K v^{(t)}), \quad v^{(t+1)} = c_n \oslash (K^\top u^{(t+1)}),
\end{equation}
where $\oslash$ denotes elementwise division. This iteration converges linearly in the Hilbert projective metric, with contraction rate $\kappa = \tau(K)^2 < 1$.
\end{corollary}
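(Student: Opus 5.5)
Since Theorem~\ref{thm:sinkhorn} already gives $\Pi^\star=\mathrm{diag}(u)K\mathrm{diag}(v)$ with $u=r_n\oslash(Kv)$ and $v=c_n\oslash(K^\top u)$, the iteration is exactly alternating enforcement of these two fixed-point equations. It remains to show convergence of the iterates, up to the scaling $(u,v)\mapsto(\alpha u,v/\alpha)$, at a linear rate. I will work on the open cones $\mathbb{R}^{T_a}_{++}$ and $\mathbb{R}^{T_b}_{++}$ with the Hilbert projective metric
\[
d_H(x,y)=\log\max_{i,j}\frac{x_i y_j}{y_i x_j}.
\]
This is a genuine metric on rays, which is precisely the quotient by the gauge freedom noted in the Remark.

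\textbf{Step 1 (invariances).} Coordinatewise inversion $x\mapsto \mathbf{1}\oslash x$ is an isometry for $d_H$. Multiplication by a fixed positive vector, $x\mapsto r_n\odot x$, is also an isometry. Hence $d_H(u^{(t+1)},u^\star)=d_H(Kv^{(t)},Kv^\star)$ and $d_H(v^{(t+1)},v^\star)=d_H(K^\top u^{(t+1)},K^\top u^\star)$, where $(u^\star,v^\star)$ is the fixed point from Theorem~\ref{thm:sinkhorn}.

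\textbf{Step 2 (Birkhoff contraction).} Every entry $K_{ij}=\exp(-C_{n,ij}/\varepsilon)$ is strictly positive, so I invoke the Birkhoff--Hopf theorem: for any positive matrix $A$,
\[
d_H(Ax,Ay)\le \tau(A)\,d_H(x,y), \qquad \tau(A)=\tanh\bigl(\Delta(A)/4\bigr)<1,
\]
where $\Delta(A)=\max_{i,j,k,l}\log\frac{A_{ik}A_{jl}}{A_{jk}A_{il}}$ is finite. Since $\Delta(K^\top)=\Delta(K)$, we get $\tau(K^\top)=\tau(K)$. Because the cost is cosine based, $C_{n,ij}\in[0,2]$, which gives the explicit bound $\Delta(K)\le 4/\varepsilon$.

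\textbf{Step 3 (compose).} Combining the two steps gives
\[
d_H(v^{(t+1)},v^\star)\le \tau(K)\,d_H(u^{(t+1)},u^\star)\le \tau(K)^2\,d_H(v^{(t)},v^\star),
\]
so the contraction rate is $\kappa=\tau(K)^2<1$.

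\textbf{Main obstacle.} The real difficulty is Birkhoff's contraction theorem itself. I will cite it, from Birkhoff or Bushell, rather than reprove it. A secondary point needs care: convergence in $d_H$ only identifies the rays of $u$ and $v$. I would therefore add a short argument that the marginal constraints fix the correct scaling in the limit, so that $\mathrm{diag}(u^{(t)})K\mathrm{diag}(v^{(t)})\to\Pi^\star$ in the ordinary sense. The argument runs as follows: after each $u$-update the row sums match $r_n$ exactly, so the product $\mathrm{diag}(u^{(t)})K\mathrm{diag}(v^{(t)})$ has total mass $1$. Ray convergence together with this fixed mass then yields convergence of $\Pi^{(t)}$ itself.
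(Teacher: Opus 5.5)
Your proposal is correct and follows essentially the same route as the paper: the Hilbert projective metric, invariance of $d_H$ under $x \mapsto z \oslash x$, Birkhoff's contraction applied to $K$ and $K^\top$, and composition to get $\kappa = \tau(K)^2$, with your direct comparison against the fixed point $(u^\star, v^\star)$ from Theorem~\ref{thm:sinkhorn} doing the job of the paper's Banach step. Your mass-normalization argument for passing from ray convergence to $\Pi^{(t)} \to \Pi^\star$ supplies a step the paper only asserts. One small indexing slip there: for the pair $(u^{(t)}, v^{(t)})$ it is the column sums that equal $c_n$, because $v^{(t)}$ is computed from $u^{(t)}$, but the total mass is $1$ either way.
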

\begin{proof}
We establish convergence via the Birkhoff-Hopf theorem on contractions in Hilbert's projective metric.

\textbf{Step 1: Hilbert projective metric.}
For vectors $x, y \in \mathbb{R}_{++}^n$, the Hilbert projective metric is defined as:
\begin{equation*}
d_H(x, y) = \log \left( \max_i \frac{x_i}{y_i} \right) - \log \left( \min_j \frac{x_j}{y_j} \right) = \log \left( \max_{i,j} \frac{x_i y_j}{x_j y_i} \right).
\end{equation*}
This defines a pseudo-metric on $\mathbb{R}_{++}^n$ satisfying $d_H(x, y) = 0$ iff $x = \alpha y$ for some $\alpha > 0$. Consequently, $d_H$ induces a true metric on the projective cone $\mathbb{R}_{++}^n / \mathbb{R}_{++}$.

\textbf{Step 2: Birkhoff's contraction theorem.}
For a matrix $A \in \mathbb{R}_{++}^{n \times m}$, define its projective diameter:
\begin{equation*}
\Delta(A) = \max_{i,j,k,l} \log \frac{A_{ik} A_{jl}}{A_{il} A_{jk}},
\end{equation*}
and the Birkhoff contraction coefficient $\tau(A) = \tanh(\Delta(A)/4) \in [0, 1)$. Birkhoff's theorem states that for any $x, y \in \mathbb{R}_{++}^m$:
\begin{equation*}
d_H(Ax, Ay) \leq \tau(A) \cdot d_H(x, y).
\end{equation*}
Since $K_{ij} = \exp(-C_{ij}/\varepsilon) > 0$ for all $i, j$, the kernel $K$ has finite projective diameter $\Delta(K) < \infty$, yielding $\tau(K) < 1$. By symmetry, $\Delta(K^\top) = \Delta(K)$, hence $\tau(K^\top) = \tau(K)$.

\textbf{Step 3: Isometry under diagonal scaling.}
For any fixed $z \in \mathbb{R}_{++}^n$, the map $\phi_z: x \mapsto z \oslash x$ is an isometry under $d_H$:
\begin{equation*}
d_H(z \oslash x, \, z \oslash y) = d_H(x, y).
\end{equation*}
This follows from: $(z_i/x_i)(y_j/z_j) / [(z_j/x_j)(y_i/z_i)] = (x_j y_i)/(x_i y_j)$.

\textbf{Step 4: Contraction of the Sinkhorn map.}
Define the half-iteration operators $T_r: v \mapsto r \oslash (Kv)$ and $T_c: u \mapsto c \oslash (K^\top u)$. For iterates $v, v' \in \mathbb{R}_{++}^m$, let $u = T_r(v)$ and $u' = T_r(v')$. Applying the isometry property and Birkhoff's theorem:
\begin{equation*}
d_H(u, u') = d_H(Kv, Kv') \leq \tau(K) \cdot d_H(v, v').
\end{equation*}
Similarly, for the second step: $d_H(T_c(u), T_c(u')) \leq \tau(K^\top) \cdot d_H(u, u')$. Composing both inequalities for one full iteration $T = T_c \circ T_r$:
\begin{equation*}
d_H(T(v), T(v')) \leq \tau(K)^2 \cdot d_H(v, v').
\end{equation*}
Since $\kappa := \tau(K)^2 < 1$, the map $T$ is a strict contraction.

\textbf{Step 5: Linear convergence.}
By Theorem~\ref{thm:sinkhorn}, the optimal scaling vectors $(u^\star, v^\star)$ exist and are unique up to multiplicative constant. Applying the Banach fixed-point theorem:
\begin{equation*}
d_H(v^{(t)}, v^\star) \leq \kappa^t \cdot d_H(v^{(0)}, v^\star).
\end{equation*}
The Hilbert metric convergence implies convergence of $\Pi^{(t)} = \mathrm{diag}(u^{(t)}) K \mathrm{diag}(v^{(t)})$ to $\Pi^\star$ in standard norms at rate $O(\kappa^t)$.
\end{proof}

\end{document}